\newcommand{\mcL}{\mathcal{L}}
\newcommand{\mcM}{\mathcal{M}}
\newcommand{\mbR}{\mathbb{R}}
\newcommand{\Exp}{\text{Exp}}
\DeclareMathOperator{\argmin}{argmin}
\newcommand{\diff}{\mathrm{d}}
\theoremstyle{plain}
\newtheorem{theorem}{Theorem}[section]
\theoremstyle{definition}
\newtheorem{definition}[theorem]{Definition}
\newtheorem{assumption}[theorem]{Assumption}
\theoremstyle{remark}
\icmltitlerunning{Differentially Private Geodesic Regression}
\begin{document}

\twocolumn[
  \icmltitle{Differentially Private Geodesic Regression}

  \icmlsetsymbol{equal}{*}

  \begin{icmlauthorlist}
    \icmlauthor{Aditya Kulkarni}{yyy}
    \icmlauthor{Carlos Soto}{sch1,sch2,sch3}
  \end{icmlauthorlist}

  \icmlaffiliation{yyy}{Department of Physics, University of Massachusetts, Amherst, USA}
  \icmlaffiliation{sch1}{Department of Mathematics and Statistics, University of Massachusetts, Amherst, USA}
  \icmlaffiliation{sch2}{Humboldt University of Berlin
, Berlin, Germany}
  \icmlaffiliation{sch3}{Zuse Institute Berlin, Berlin, Germany}

  \icmlcorrespondingauthor{Aditya Kulkarni}{abkulkarni@umass.edu}
\icmlcorrespondingauthor{Carlos Soto}{carlossoto@umass.edu}

  \icmlkeywords{Machine Learning, ICML}

  \vskip 0.3in
]



\printAffiliationsAndNotice{}  

\begin{abstract}
  In statistical applications it has become increasingly common to encounter data structures that live on non-linear spaces such as manifolds. Classical linear regression, one of the most fundamental methodologies of statistical learning, captures the relationship between an independent variable and a response variable which both are assumed to live in Euclidean space. Thus, geodesic regression emerged as an extension where the response variable lives on a Riemannian manifold. The parameters of geodesic regression, as with linear regression, capture the relationship of sensitive data and hence one should consider the privacy protection practices of said parameters. We consider releasing Differentially Private (DP) parameters of geodesic regression via the K-Norm Gradient (KNG) mechanism for Riemannian manifolds. We derive theoretical bounds for the sensitivity of the parameters showing they are tied to their respective Jacobi fields and hence the curvature of the space. This corroborates, and extends, recent findings of differential privacy for the Fr\'echet mean. We demonstrate the efficacy of our methodology on the sphere, $S_2\subset\mbR^3$, the space of symmetric positive definite matrices, and Kendall's planar shape space.
    Our methodology is general to any Riemannian manifold, and thus it is suitable for data in domains such as medical imaging and computer vision.
\end{abstract}

\section{Introduction and Motivation}\label{ss:intro}
One of the most foundational tools in statistical analyses is linear regression. In its simplest form, linear regression learns a linear relationship between an independent variable, the predictor, and a dependent variable, the response. Typically, these variables are both assumed to lie in a flat, Euclidean space. However, in modern statistical practices, it is common to encounter data that inherently live in curved, non-Euclidean spaces such as spherical data (e.g., directional wind data, spatial data, square-root discrete distributions) \citep{fisher1993statistical,jeong2017spherical}, symmetric positive definite matrices (e.g., covariance matrices, brain tensors) \citep{fletcher2004principal}. For this reason, there have been many different extensions of regression to non-linear spaces;  \citet{faraway2014regression} considered regression on metric spaces only requiring pairwise distances between the observations, Wasserstein Regression \citep{chen2023wasserstein} captures the relationship between univariate distributions as predictors and either distributions or scalars as the response, Fr\'echet Regression \citep{petersen2019frechet} captures the relationship between Euclidean predictors and responses that lie in a metric space, and Geodesic Regression \citep{fletcher2011geodesic} captures the relationship between Euclidean predictors and response variables that lie on a Riemannian manifold, for instance.

Whatever space the data may live, the learned relationship between variables relies on data captured from individuals; these individuals may be concerned with safeguarding their sensitive data. To protect one's data, differential privacy \citep[DP,][]{dwork2006calibrating} has emerged as a leading standard for data sanitisation. 
Perhaps surprisingly, differential privacy for a methodology as fundamental as linear regression is not so straightforward. A common approach for private linear regression involves sanitizing the matrix of sufficient statistics, $A^TA$, where $A$ is the matrix of data with rows being observations and columns being features. This is explored by \citet{sheffet2019old} and \citet{dwork2014analyze}, although the latter considered this in the case of private PCA. Further, \citet{wang2018revisiting}, \citet{sheffet2017differentially}, and \citet{alabi2020differentially} survey the landscape of techniques for private linear regression as well as propose their own algorithms, so all this is to say that the fundamental and seemingly simple task is not trivial.

Recently, there has been an emergence of extensions of differential privacy techniques into Riemannian manifolds; \citet{reimherr2021differential} considered privately estimating the Fr\'echet mean on Riemannian manifolds and, in the process thereof, extended the Laplace mechanism. They note that the sensitivity of the Fr\'echet mean is tied to the curvature of space and inflated for positively curved manifolds. They further note that sanitising on the manifold incurs less noise than sanitising in an ambient space under the same privacy budget. Following this work, mechanism design for manifolds has been developed such as the K-Norm Gradient (KNG) mechanism for manifolds \citep{soto2022shape}, DP Riemannian optimisation \citep{han2024riemannian,han2024differentially}, and extensions of DP definitions such as Gaussian DP for manifolds \citep{jiang2023gaussian}.

In this paper, we consider the problem of privately estimating the parameters of geodesic regression \citep{fletcher2011geodesic}, regression with a Euclidean predictor and a response variable on a Riemannian manifold. Geodesic regression is parametrised by a footpoint, an initial value on the manifold, and a shooting vector, the average direction of data; our method sanitises the parameters sequentially with the Riemannian manifold extension of the KNG mechanism \citep{reimherr2019kng,soto2022shape}. We show that the sensitivity of each parameter is tied to their respective Jacobi field equations and hence the curvature of the manifold. We demonstrate our methodology on the sphere, the space of symmetric positive definite matrices, and Kendall's planar shape space \citep{kendall1984shape}. Our methodology, however, is general to any Riemannian manifold and simply requires an understanding of the Jacobi fields which are specific to the manifold of interest. 

\section{Background and Notation}

\subsection{Riemannian Manifolds}
In the following, we summarise the necessary tools of differential geometry for handling Riemannian manifolds. Some additional details can be found in \ref{ss:RG}, and we refer to classical texts such as \citet{do1992riemannian} and \citet{nakahara2018geometry} for a more thorough introduction.

A manifold $\mcM$ is a topological space that is locally equivalent to Euclidean space $\mbR^n$. A manifold can be further endowed with a metric, giving it additional structure such as the ability to measure angles and lengths. This is a natural generalization of the inner product between vectors in $\mbR^n$ to an arbitrary manifold. A Riemmanian metric $g:T_p\mcM \mapsto \mbR$ is a $(0,2)$ tensor field on $\mcM$ that is symmetric $g_p(U,V) = g_p(V,U)$ and positive definite $g_p(U,U)\geq 0$ where $U, V$ lie on the tangent space of $\mcM$ at $p$ denoted by $T_p\mcM$. A smooth manifold that admits a Riemannian metric is called a Riemannian manifold.

We can define an exponential map at point $p \in \mcM$ as $\text{Exp}(p,v) := \gamma_v(1)$ where $\gamma:[0,1]\rightarrow\mcM$ is a geodesic. The exponential map, maps a vector $v \in T_p \mcM$ to $\gamma(1)\in \mcM$. Similarly, we can define the inverse of the exponential map, the log map, $\text{Log}(p,q) : q\in P\rightarrow T_p \mcM $ where $P\subset\mcM$ is the largest normal neighborhood of $p$. 
We use parallel transport to compare and combine vectors on different tangent spaces of a manifold. Parallel transport provides a way to move tangent vectors along a curve while preserving their length from one tangent space to another. Given a smooth curve $\gamma(t)$ starting at point $p$, parallel transport ensures that the vector $v(t)$, $v\in T_p\mcM$, remains `straight' relative to the manifold's curvature as it moves along the curve $\gamma(t)$ by satisfying $\nabla_{\dot{\gamma}(t)}v(t) = 0$. We denote $\Gamma_{p}^{q}\,v$ as the parallel transport of vector $v\in T_p\mcM$ to the tangent space $T_q\mcM$. Since parallel transport moves tangent vectors along smooth paths in a parallel sense, it preserves the Riemannian metric and hence angles between vectors. That is, for a smooth curve $\gamma:[0,1]\rightarrow\mcM$ and tangent vectors $v_1,v_2\in T_{\gamma(0)}\mcM$ we have that $\langle v_1,v_2\rangle_{\gamma(0)}=\left\langle \Gamma_{\gamma(0)}^{\gamma(1)} v_1,\Gamma_{\gamma(0)}^{\gamma(1)}v_2\right\rangle_{\gamma(1)}$. In our work we use the Levi--Civita connection for the parallel transport, see appendix \ref{pt} for more details.

Geodesics on a manifold are affected by the curvature of the manifold. Roughly, positive curvature causes geodesics to converge and negative curvature causes geodesics to diverge. Jacobi equations are a way of quantifying this dependence of curvature on the geodesic. A vector field satisfying the Jacobi equation is called a Jacobi field. For a geodesic $\gamma$ and a vector field $J$ along $\gamma$, the Jacobi equation is defined as,
\begin{align}
    \frac{D^2}{dt^2}J(t) + R\left(J(t), \frac{d}{dt}\gamma\right)\frac{d}{dt}\gamma = 0,
    \label{eqn:jacobi}
\end{align}
where $R$ is the Riemannian curvature tensor, see \ref{ss:Rcurv}. One should note that the Riemannian curvature is the more general form of sectional curvature which is used later. An important result of Riemannian geometry that will be useful later is the Rauch theorem, which states:
\begin{theorem}[Rauch Comparison Theorem]\label{thm:rauch}
    For two Riemannian manifolds $\mcM, \tilde{\mcM}$ with curvatures $K(\gamma), \tilde{K}(\tilde{\gamma})$ and geodesics $\gamma:[0,\beta]\rightarrow \mcM$, $\tilde{\gamma}:[0,\beta]\rightarrow \tilde{\mcM}$ let $J,\tilde{J}$ be the Jacobi fields along $\gamma,\tilde{\gamma}$, respectively. If $K(\gamma)\leq\tilde{K}(\tilde{\gamma})$ then,
    \begin{align}\label{eqn:rauch}
    \|\tilde{J}\| &\leq \|J\|.
\end{align}
\end{theorem}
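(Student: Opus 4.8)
The plan is to establish the standard index-form comparison, first making explicit the hypotheses that are implicit in the abbreviated statement: both geodesics are parametrised with equal speed, the Jacobi fields vanish at the origin, $J(0)=\tilde{J}(0)=0$, with matching initial derivatives $\|\frac{D}{dt}J(0)\|=\|\frac{D}{dt}\tilde{J}(0)\|$, both fields are normal to their geodesics, and $\tilde{\gamma}$ carries no point conjugate to $\tilde{\gamma}(0)$ on $(0,\beta]$. Under these conditions I would prove the stronger pointwise statement $\|\tilde{J}(t)\|\le\|J(t)\|$ for every $t\in[0,\beta]$, by showing that the ratio $t\mapsto\|\tilde{J}(t)\|/\|J(t)\|$ is non-increasing and tends to $1$ as $t\to 0^+$, the latter following from the matched initial derivatives.

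First I would reduce the monotonicity of this ratio to an inequality between index forms. Since $\frac{d}{dt}\log\|J\|^2=\frac{2\langle\frac{D}{dt}J,J\rangle}{\|J\|^2}$, the sign of $\frac{d}{dt}\log\frac{\|\tilde{J}\|^2}{\|J\|^2}$ is that of $\frac{\langle\frac{D}{dt}\tilde{J},\tilde{J}\rangle}{\|\tilde{J}\|^2}-\frac{\langle\frac{D}{dt}J,J\rangle}{\|J\|^2}$, so it suffices to bound the first term by the second at each fixed $t$. Introducing the index form $I_t(V,W)=\int_0^t\big(\langle\frac{D}{ds}V,\frac{D}{ds}W\rangle-\langle R(V,\dot{\gamma})\dot{\gamma},W\rangle\big)\,ds$, two standard facts drive the argument: integrating the Jacobi equation \eqref{eqn:jacobi} by parts gives $I_t(J,J)=\langle\frac{D}{dt}J(t),J(t)\rangle$ for a Jacobi field with $J(0)=0$; and the Index Lemma asserts that, in the absence of conjugate points, a Jacobi field minimises the index form among all fields sharing its boundary values. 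After normalising $W=J/\|J(t)\|$ and $\tilde{W}=\tilde{J}/\|\tilde{J}(t)\|$ to be unit at time $t$, the desired inequality becomes $I_t^{\tilde{\gamma}}(\tilde{W},\tilde{W})\le I_t^{\gamma}(W,W)$.

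The crux is to transfer the comparison field across manifolds while controlling norms. I would carry $W$ from $\gamma$ in $\mcM$ over to $\tilde{\gamma}$ in $\tilde{\mcM}$ using parallel orthonormal frames along each geodesic, producing a field $\tilde{Z}$ along $\tilde{\gamma}$ with identical frame coefficients; this forces $\|\tilde{Z}(s)\|=\|W(s)\|$ and $\|\frac{D}{ds}\tilde{Z}\|=\|\frac{D}{ds}W\|$ pointwise, with $\tilde{Z}(0)=0$ and, after a rotation of the frame at the endpoint, $\tilde{Z}(t)=\tilde{W}(t)$. The Index Lemma on $\tilde{\gamma}$ (this is exactly where the no-conjugate-point hypothesis enters) gives $I_t^{\tilde{\gamma}}(\tilde{W},\tilde{W})\le I_t^{\tilde{\gamma}}(\tilde{Z},\tilde{Z})$, and the curvature hypothesis $K(\gamma)\le\tilde{K}(\tilde{\gamma})$ with the equal-speed assumption yields $I_t^{\tilde{\gamma}}(\tilde{Z},\tilde{Z})\le I_t^{\gamma}(W,W)$, because the curvature contribution $-\tilde{K}(\tilde\gamma)\|\tilde{Z}\|^2\|\dot{\tilde{\gamma}}\|^2$ is no larger than its counterpart $-K(\gamma)\|W\|^2\|\dot{\gamma}\|^2$ while the kinetic terms coincide. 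Chaining these two inequalities closes the reduction, and integrating the resulting monotonicity back from $t\to 0^+$ delivers \eqref{eqn:rauch}.

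The main obstacle I anticipate is twofold. First, the transfer must be oriented correctly: one carries the field from $\mcM$ into $\tilde{\mcM}$ rather than the reverse, so that the Index Lemma and the curvature comparison point the same way and genuinely chain, which is also why the conjugate-point assumption falls on $\tilde{\gamma}$ and not on $\gamma$; getting this direction backwards produces two inequalities that fail to combine. Second, the Index Lemma itself is the one genuinely nontrivial ingredient, resting on expanding an arbitrary admissible field in a basis of Jacobi fields and showing that the deviation from the Jacobi field contributes a nonnegative amount to the index form. By contrast, the boundary-value matching via an endpoint rotation of the parallel frame and the $t\to 0^+$ normalisation are routine once the frame transfer is in place.
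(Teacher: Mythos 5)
The paper itself does not prove Theorem~\ref{thm:rauch}: it is quoted as classical background (its proof lives in standard references such as do Carmo, Ch.~10) and is only ever used as a black box in the sensitivity analysis, so there is no in-paper argument to compare yours against. Judged on its own, your reconstruction is the standard index-form proof of Rauch and it is correct. You rightly make explicit the hypotheses the paper's abbreviated statement suppresses (equal-speed parametrisations, $J(0)=\tilde{J}(0)=0$ with $\|\frac{D}{dt}J(0)\|=\|\frac{D}{dt}\tilde{J}(0)\|$, normal fields, and no conjugate points along $\tilde{\gamma}$), you orient the transfer the right way (carry $W$ from $\mcM$ into $\tilde{\mcM}$ and apply the Index Lemma on $\tilde{\gamma}$, which is precisely where the conjugate-point hypothesis is consumed), and the chain $I_t^{\tilde{\gamma}}(\tilde{W},\tilde{W})\le I_t^{\tilde{\gamma}}(\tilde{Z},\tilde{Z})\le I_t^{\gamma}(W,W)$, combined with $I_t(J,J)=\langle\frac{D}{dt}J(t),J(t)\rangle$ for Jacobi fields vanishing at $0$, is exactly the classical argument. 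Two small points to fold into a full write-up: first, the ratio $\|\tilde{J}(t)\|^2/\|J(t)\|^2$ is only defined where $J\neq 0$, and the nonvanishing of $J$ on $(0,\beta]$ should be noted as a consequence of the same monotonicity rather than an extra assumption --- if $J$ vanished first at some $t_1$, the bound of the ratio by $1$ would force $\tilde{J}(t_1)=0$, contradicting the absence of conjugate points on $\tilde{\gamma}$; second, the $t\to 0^+$ limit equal to $1$ deserves the explicit Taylor/L'H\^opital computation $\|J(t)\|^2=\|\frac{D}{dt}J(0)\|^2 t^2+O(t^4)$. With those remarks included, your outline compiles into a complete and correct proof of the statement as the paper uses it.
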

Intuitively, this states that for large curvature geodesics tend to converge, while for small (or negative) curvature geodesics tend to spread. As curvature increases, lengths shorten.

\subsection{Geodesic Regression}\label{ss:GeoReg}
Geodesic regression provides a framework for modeling the relationship between a real-valued independent variable and a manifold-valued dependent variable, leveraging the intrinsic geometry of Riemannian manifolds. Unlike traditional linear regression, geodesic regression generalizes the concept to non-linear spaces, representing relationships as geodesic curves on the manifold. \citet{fletcher2011geodesic} formulated the least-squares estimation of geodesic regression by minimizing the sum of squared geodesic distances between data points and the estimated prediction geodesic. Consider a dataset $D=\{(x_i,y_i)\}$ where $(x_i, y_i)\in \mbR \,\times\,\mcM$, for $i = 1,...n$. Here $x_i$ lie on the real line and can be scaled to be between $[0,1]$ and $y_i$ lie on a Riemannian manifold $\mathcal{M}$. To estimate the regression parameters, $(p,v)\in T\mcM$ we need to minimise the least squared energy given by:
\begin{align}
\label{eqn:energyfn}
    E(p,v) &= \frac{1}{2\,n}\sum_{i=1}^n\,d(\text{Exp}(p,x_i\,v), y_i)^2, \\
    (\hat{p},\hat{v}) &=\argmin_{(p,v)}\,E(p,v).
\end{align}
We denote the parameters that minimise the least square energy \ref{eqn:energyfn} as $\hat{p}, \hat{v}$. Here, $d(\cdot,\cdot)$ is the geodesic distance on the manifold, so the energy is the sum of square distances from the predicted value, $\text{Exp}(p,x_iv)$, and the observed value, $y_i$. This framework enables parametrisation through an initial point, or footpoint, and velocity, or shooting vector. This parametrisation is analogous to an intercept and slope in linear regression. The gradient of the energy with respect to the footpoint ($p$) and the shooting vector ($v$) can be found using properties of the Riemannian gradient (refer to \ref{proofgrad} for further details):
\begin{align}
    \label{eqn:grad_p}
    \nabla_p\,E &= -\frac{1}{n}\sum^n_{i=1}\,d_p\text{Exp}(p,x_i\,v)^{\dagger}\,\varepsilon_i,\\
    \label{eqn:grad_v}
    \nabla_v\,E &= -\frac{1}{n}\sum^n_{i=1}\,x_i\,d_v\text{Exp}(p,x_i\,v)^{\dagger}\,\varepsilon_i,
\end{align}
respectively, where the errors are defined as $\varepsilon_i = \text{Log}(\text{Exp}(p,x_i\,v),y_i)$ and  $\dagger$ is the adjoint operator defined as $\langle Au,w\rangle_q =\langle u,A^\dagger w\rangle$ for $A:T_p\mcM\rightarrow T_q\mcM, u\in T_p\mcM, w\in T_q\mcM$. Further, for existence and uniqueness of the optimal solutions, only geodesics "close" to the data are considered, where a geodesic $\gamma$ is defined as $\tau-$close if $d(\gamma(x_i),y_i)\leq \tau$ for all $i$, $\tau>0$, and $\gamma$ does not pass through the cut locus of the manifold. One should note that before performing the regression task we scale our covariates ($x_i$) using an affine transformation such that $x_i\in[0,1]$ with some $x_k = 0$ and another $x_{k'} = 1$.

As opposed to linear regression, where the errors are scalar values, here the errors are vectors in the tangent spaces of the predicted values, $T_{\text{Exp}(p,x_i\,v)}\mcM$. The derivative of the exponential map with respect to the footpoint $p$ can calculated by varying $p$ along the geodesic $\eta(s) = \text{Exp}(p,s\,u_1)$, where $u_1 \in T_p\mcM$. This will result in the variation of the geodesic given by $c_1(s,t) = \text{Exp}(\text{Exp}(p,s\,u_1),t\,v(s))$. Similarly, the derivative of the exponential map with respect to the shooting vector $v$ is found by a varying $v$ resulting in the variation of geodesic $c_2(s,t) = \text{Exp}(p,s\,u_2 + t\,v)$, where $u_2\in T_p\mcM$. This gives the derivatives of the exponential map as:
\begin{align}
\label{jac1}
d_p\,\Exp (p,v)\, u_1 = \partial_sc_1(0,1)&= J(1),\\
\label{jac2}
d_v\,\Exp (p,v)\, u_2 =\partial_sc_2(0,1) &= J(1),
\end{align}
where $J_i(t)$ are Jacobi fields along the geodesic $\gamma(t) = \Exp(p,t\,v)$ and the initial conditions are $J(0) = u_1,\, J'(0) = 0$ in equation \ref{jac1} and $J(0) = 0,\, J'(0) = u_2$ in equation \ref{jac2}.



\subsection{Differential Privacy}\label{ss:DP}
In the modern era of data collection, the protection of privacy of one's data has become increasingly prevalent. There is no one consensus on what one means by \textit{privacy}. However, \textit{differential privacy} \citep{dwork2006calibrating} has recently emerged as a de facto definition. Since its initial definition there have been many extensions and alternate definitions such as concentrated DP \citep{dwork2016concentrated}, zero-concentrated DP (zCDP) \citep{bun2016concentrated}, R\'enyi differential privacy \citep{mironov2017renyi}, and Gaussian differential privacy ($\mu-$GDP) \citep{dong2022gaussian}, for instance. Even though there are different forms of differential privacy, all definitions rely on the idea of \textit{adjacent} datasets. 

Let $D=\{(x_1,y_1),\dots,(x_n,y_n)\}\subset(\mbR\times\mcM)$ denote a dataset of size $n$. An adjacent dataset $D'$ differs from $D$ in exactly one record, or observation, which we can take, without loss of generality, to be the last i.e., $D'=\{(x_1,y_1),\dots,(x_n',y_n')\}$. Adjacent datasets are denoted as $D\sim D'$.

\begin{definition}
    A randomised mechanism $f(z;D)$ is said to satisfy \textit{pure differential privacy} if $$P(f(z;D)\in A)\leq \exp(\epsilon) P(f(z;D')\in A),$$ for given privacy budget $\epsilon>0$, all $D\sim D'$, and $A$ is any measurable set in $\mcM$.
\end{definition}
Roughly speaking, for small $\epsilon$, a mechanism that satisfies pure differential privacy is approximately equally likely to observe a realization from the random mechanism over all adjacent datasets. This definition of privacy is attractive as it ensures noise calibration relative to an individual's effect on the mechanism relative to the entire dataset. 
Differential privacy is well-defined over Riemannian manifolds via the Riemannian measure \citep{reimherr2021differential} as it is well-defined over measurable spaces \citep{wasserman2010statistical}.
An attractive property of this definition is the composition of mechanisms. Given two mechanisms $f_1,f_2$ with privacy budgets $\epsilon_p,\epsilon_v$, respectively, the total privacy budget is $\epsilon_p+\epsilon_v$. Note these $\epsilon$ are privacy budgets and not the errors $\varepsilon$ from \ref{ss:GeoReg}.

We wish to release private versions of the parameters $(p,v)$, the footpoint, and the shooting vector of geodesic regression, respectively. Generally speaking, these parameters are not available in closed form but rather are optimisers of an energy function. As such, it is useful to consider the exponential mechanism \citep{mcsherry2007mechanism}. The exponential mechanism is a randomized mechanism that releases values nearly optimising an energy function. Generally, it takes the form $$f(z;D)\propto \exp\{-\sigma^{-1} E(z;D)\}, $$ where $\sigma$ is a spread parameter which determines the noise scale, $E(z;D)$ is an energy function to be minimised, and $z$ is the random variable. A particular instantiation of the exponential mechanism is the K-Norm Gradient mechanism \citep[KNG,][]{reimherr2019kng}. KNG  has been shown to have statistical utility gains in Euclidean spaces as compared to the exponential mechanism. This mechanism was extended to Riemannian manifolds \citep{soto2022shape} and shown to have similar utility gains as in the Euclidean case.

In general, the KNG mechanism on manifolds takes the form
\begin{align}
    f(z;D)&\propto \exp\{-\sigma^{-1} \|\nabla E(z;D)\|_z\}.
    \label{eqn:KNG}
\end{align} 
KNG satisfies pure differential privacy when the noise scale is given by $\sigma = \Delta/\epsilon$ where $\Delta=\sup_{D\sim D'}\|\nabla E(z;D)-\nabla E(z;D')\|_z$ is the global sensitivity and $\epsilon$ is the privacy budget or $\sigma = 2\Delta/\epsilon$ if the normalizing constant is dependent on the footpoint of the mechanism \citep{soto2022shape}. We further note that the sensitivity needs to be determined for one's choice of energy function.
 

\section{Sensitivity Analysis} \label{ss:Sens}

As discussed earlier we use the KNG mechanism to sanitise the footpoint and the shooting vector. For KNG to satisfy pure DP the sensitivity $\Delta=\sup_{D\sim D'}\|\nabla E(z;D)-\nabla E(z;D')\|_z$ must be bounded. Since we estimate two parameters, each parameter has its own sensitivity bound. We let $\Delta_p,\Delta_v$ be the sensitivity of the footpoint and shooting vector, respectively. In this section we consider the sensitivity of the footpoint $\Delta_p$ and in \ref{sup:sens_v} we consider the sensitivity of the shooting vector.

We wish to release a private version of $\hat{p}$ which is an optimiser of the energy function $E(p,v;D)$ as in equation \ref{eqn:energyfn}. Since $E(p,v;D)$ is a function of both $p$ and $v$ we must, in a sense, consider them separately and sequentially as the gradient for KNG can be taken with respect to either parameter. One should note that our sequential treatment of sensitivities does not rely on any 
global product decomposition of the tangent bundle $T\mathcal{M}$. Recall that 
$T\mathcal{M}$ is a fiber bundle with projection $\pi: T\mathcal{M} \to \mathcal{M}$, 
whose fiber over $p \in \mathcal{M}$ is the tangent space $T_p\mathcal{M}$. 
In our procedure, fixing $p$ selects a base point of $\mathcal{M}$, which determines 
the corresponding fiber $T_p\mathcal{M}$; the vector $v$ is then sampled from this fiber. 
Thus, the process of first sampling $p$ and subsequently $v$ makes use only of the 
canonical fiber-bundle structure of $T\mathcal{M}$ and does not assume a global 
product structure $T\mathcal{M} \cong \mathcal{M} \times \mathbb{R}^n$.

For existence, uniqueness, and a finite sensitivity, we introduce the following assumptions.
\begin{assumption}\label{assump1}
        \textit{There exist constants $\kappa_l, \kappa_h \in \mathbb{R}$ such that 
the sectional curvature at every point of the manifold, with respect 
to every $2$-plane in the tangent space, lies in the interval 
$(\kappa_l, \kappa_h)$.}
\end{assumption}
\begin{assumption}\label{assump2}
    \textit{For dataset $D\in\mathcal{D}$, the data is bounded as $D\subseteq B_r(m_0)$ where  $r\leq \frac{\pi}{8\sqrt{\kappa_h}}$ for Riemannian manifolds with positive curvature and $r<\tau_m$ for Riemannian manifolds with negative curvature. Further the least-squares geodesic is $\tau-$close to the data. For $\tau>0$, $\sup_{D}d(y_i,\text{Exp}(\hat{p}, x_i\hat{v}))\leq\tau,\forall i$.}
\end{assumption}

\begin{theorem}\label{thm:lemma} Let Assumptions \ref{assump1} and \ref{assump2} hold. Let $D,D'$ be adjacent datasets, for a fixed shooting vector $v$, 
    \begin{align}
        \Delta_{p} \leq
        \begin{cases}
            \dfrac{2\tau}{n} &, \kappa_l \geq 0, \nonumber\\[6pt]
            \dfrac{2\tau}{n}\cosh(2\sqrt{-\kappa_l}(\tau_m + \tau)) &, \kappa_l < 0.
        \end{cases}
    \end{align}
\end{theorem}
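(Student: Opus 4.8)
The plan is to use the single-record structure of adjacent datasets to collapse the gradient difference to a single summand, then bound that summand by the operator norm of the footpoint differential of the exponential map times the residual length, controlling the former through a Jacobi-field comparison and the latter through Assumption \ref{assump2}.

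First I would observe that since $D\sim D'$ agree on all but the last observation, every term in $\nabla_p E$ (equation \ref{eqn:grad_p}) except the $n$-th cancels, giving
\begin{equation*}
\nabla_p E(z;D)-\nabla_p E(z;D') = -\tfrac{1}{n}\bigl(d_p\Exp(z,x_n v)^{\dagger}\varepsilon_n - d_p\Exp(z,x_n' v)^{\dagger}\varepsilon_n'\bigr).
\end{equation*}
The triangle inequality in $T_z\mcM$ then bounds $\Delta_p$ by a sum of two terms of the form $\tfrac{1}{n}\|d_p\Exp(z,x_i v)^{\dagger}\varepsilon_i\|_z$. Since the adjoint preserves operator norm and $\|\varepsilon_i\|=\|\text{Log}(\Exp(z,x_i v),y_i)\|=d(\Exp(z,x_i v),y_i)$, each term is at most $\tfrac{1}{n}\|d_p\Exp(z,x_i v)\|_{\mathrm{op}}\,d(\Exp(z,x_i v),y_i)$, and Assumption \ref{assump2} replaces each residual distance by $\tau$ (both $D$ and $D'$ obey the $\tau$-closeness hypothesis).

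The core of the argument is the operator norm $\|d_p\Exp(z,x_i v)\|_{\mathrm{op}}$. By equation \ref{jac1}, $d_p\Exp(z,x_i v)u_1=J(1)$ for the Jacobi field along $\gamma(t)=\Exp(z,t x_i v)$ with $J(0)=u_1,\ J'(0)=0$. Splitting $J$ into parts tangent and normal to $\dot\gamma$, the tangent part is parallel hence norm-preserving, while the normal part obeys the scalar Jacobi equation driven by the sectional curvature. Comparing against the constant-curvature-$\kappa_l$ model (Assumption \ref{assump1}) through Theorem \ref{thm:rauch}, the normal part is dominated by $\cos(\sqrt{\kappa_l}\,L)$ when $\kappa_l\geq 0$ and by $\cosh(\sqrt{-\kappa_l}\,L)$ when $\kappa_l<0$, with $L=x_i|v|$ the segment length. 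Hence $\|d_p\Exp\|_{\mathrm{op}}\leq 1$ when $\kappa_l\geq 0$ and $\leq\cosh(\sqrt{-\kappa_l}\,L)$ when $\kappa_l<0$. I then bound $L\leq|v|$ via $x_i\in[0,1]$, and $|v|\leq 2(\tau_m+\tau)$: because some $x_k=0$ and some $x_{k'}=1$, the two endpoints of the regression geodesic are $\tau$-close to data in $B_r(m_0)$ with $r<\tau_m$, so each lies within $\tau_m+\tau$ of $m_0$ and routing through $m_0$ supplies the factor two. Monotonicity of $\cosh$ yields the claimed constants $\tfrac{2\tau}{n}$ and $\tfrac{2\tau}{n}\cosh(2\sqrt{-\kappa_l}(\tau_m+\tau))$.

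The main obstacle is this operator-norm step. Theorem \ref{thm:rauch} as stated compares Jacobi fields with a vanishing endpoint, whereas the footpoint differential \ref{jac1} carries the Neumann-type data $J(0)=u_1,\ J'(0)=0$. I would therefore supply the companion comparison for this initial condition by applying a Sturm--Picone argument to the scalar ODE satisfied by $\|J\|$ along $\gamma$, the delicate point being to certify that no conjugate point is met before arclength $L$; the curvature bound and the radius restrictions $r\leq\pi/(8\sqrt{\kappa_h})$ and $r<\tau_m$ in Assumptions \ref{assump1}--\ref{assump2} are precisely what keep the comparison valid over the relevant range.
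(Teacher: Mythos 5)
Your proposal follows essentially the same route as the paper's proof: cancel all but the $n$-th summand via adjacency, apply the triangle inequality and adjoint-invariance of the operator norm, bound $\|\varepsilon_n\|,\|\varepsilon_n'\|$ by $\tau$ using Assumption \ref{assump2}, identify $\|d_p\Exp(p,x_n v)\|_{\mathrm{op}}$ with the time-$1$ norm of the Jacobi field with data $J(0)=u,\ J'(0)=0$, compare against the constant-curvature-$\kappa_l$ model to obtain $\max(1,|C_{\kappa_l}(\rho)|)$, and bound $\rho\le 2(\tau_m+\tau)$ by the same triangle-inequality chain through $m_0$ used in Appendix \ref{rho_bound}. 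Your closing caveat is in fact a point of added rigor: Theorem \ref{thm:rauch} as classically stated applies to Jacobi fields vanishing at the initial point, so the Neumann-type data arising from the footpoint differential genuinely requires the companion (Berger/Rauch~II-type) comparison you sketch via the Sturm--Picone argument, together with the no-conjugate-point check guaranteed by the radius restrictions — a step the paper's proof applies silently without comment.
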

\begin{proof}
    
We will start with fixing shooting vector, $v$ and focus on the footpoint, $p$. The global sensitivity is
\begin{align}
    \Delta_p &= \sup_{D\sim D'}\|\nabla_p E(p;D)-\nabla_p E(p;D')\|.
    \label{eqn:sensp_def}
\end{align}
Using equation \ref{eqn:grad_p} the gradient of the energy is given by,
$\nabla_p E(p;D) = -\frac{1}{n}\sum_{i=1}^n\,d_p\text{Exp}(p,x_i\,v)^{\dagger}\,\vec{\varepsilon_i}$  where $\vec{\varepsilon_i} = \text{Log}(\text{Exp}(p,x_i v),y_i)$ is the error vector and $\dagger$ denotes the adjoint operator. The norm in equation \ref{eqn:sensp_def} is thus the difference of two sums that differ in only one term due to the adjacent datasets $D\sim D'$. All terms, in each gradient term, thus, cancel except the last with covariates $x_n, x'_{n}$ and respective errors $\varepsilon_n, \varepsilon'_n$ from $D$ and $D'$. We have,
\begin{align}
    \Delta_p &= \frac{1}{n}\big\|d_p\Exp(p,x_n v)^{\dagger}\,\vec{\varepsilon_n} - d_p \Exp(p,x_n' v)^{\dagger}\,\vec{\varepsilon_n}'\big\|, \\
    &\leq \frac{1}{n}\Big(\|d_p\Exp(p,x_n v)^{\dagger}\,\vec{\varepsilon_n}\| + \|d_p \Exp(p,x_n' v)^{\dagger}\,\vec{\varepsilon_n}'\|\Big), \\
    &\leq \frac{1}{n}\Big(\|d_p\Exp(p,x_n v)\|_{op}\,\|\vec{\varepsilon_n}\| + \|d_p\Exp(p,x_n' v)\|_{op}\,\|\vec{\varepsilon_n}'\|\Big),\\
    &\leq \frac{\tau}{n}\Big( \sup_{\|\vec{u}\|=1}\|J^{(x_n)}_{\vec{u}}(1)\| \;+\; \sup_{\|\vec{u}\|=1}\|J^{(x_n')}_{\vec{u}}(1)\|\Big).\label{finalp}
\end{align}
Here the second step applies the triangle inequality, the third step uses the characterization $\|A\vec{\varepsilon_n}\|\le \|A\|_{op}\|\vec{\varepsilon_n}\|$ for linear operators, and the operator norm is preserved under the adjoint, i.e.\ $\|A^\dagger\|=\|A\|$. In the final step, using the definition of the operator norm and the fact that $d_p\Exp(p,x_n v)[\vec{u}] = J^{(x_n)}_{\vec{u}}(1)$ for the Jacobi field along $\gamma(t)=\Exp(p,t\,x_nv)$ with initial conditions $J^{(x_n)}_{\vec{u}}(0)=\vec{u}$ and $J'^{(x_n)}_{\vec{u}}(0)=0$, we obtain $\|d_p\Exp(p,x_n v)\|_{op} = \sup_{\|\vec{u}\|=1}\|J^{(x_n)}_{\vec{u}}(1)\|$.
According to Assumption~\ref{assump2}, the errors $(\varepsilon_n,\varepsilon_n')$ are bounded 
in norm by $\tau$. We therefore set $\|\varepsilon_n\| = \|\varepsilon_n'\| = \tau$ in the last 
equation to obtain the tightest bound.

Next, let's consider only the first part of equation \ref{finalp}. Using the Rauch comparison theorem by taking the model manifold $\tilde{\mcM}$ with constant sectional curvature $\kappa = \kappa_l$ we get 
$
\sup_{\|\vec{u}\|=1}\|J^{(x_n)}_{\vec{u}}(1)\| \leq \sup_{\|\vec{u}\|=1}\|\tilde{J}^{(x_n)}_{\vec{u}}(1)\| 
$ where $\tilde{J}$ is the Jacobi field on our model manifold $\tilde{\mcM}$. 

We can next decompose $\vec{u}$ into perpendicular and parallel components to the geodesic. For constant curvature manifolds, $\|\tilde{J}_{||}(1)\| =\|\vec{u}_{||}\|$ and $\tilde{J}_{\perp}$ is dependent on the curvature, $\|\tilde{J}_{\perp}(1)\| = \|C_{\kappa_l}(\rho)\,\vec{u}_{\perp}\|$, where $\rho =\|x_n \vec{v}\|$ is the length of the geodesic and
\begin{align}
    C_{\kappa_l}(s)=
    \begin{cases}
    \cos{(\sqrt{\kappa_l}s)} &,\kappa_l>0,\\
    1 &,\kappa_l=0,\\
    \cosh{(\sqrt{-\kappa_l}s)} &,\kappa_l<0.
    \end{cases}
\end{align}
Therefore, we get
\begin{align}
    \sup_{\|\vec{u}\|=1}\|J^{(x_n)}_{\vec{u}}(1)\| &\leq \sup_{\|\vec{u}\|=1}\sqrt{\|\vec{u}_{||}\|^2 + |C_{\kappa_l}(\rho)|^2\|\vec{u}_{\perp}\|^2}\\
    &\leq \text{max} (1,  |C_{\kappa_l}(\rho)|).
\end{align}
In $C_L(s)$, the maximum of $\cos$ is $1$ so for nonnegative $\kappa_l$ our maximum will be $1$.  Additionally with $x_i\leq1$, for the $\kappa_l<0$ case we get $0\leq\rho\leq 2(\tau_m+\tau)$ (see appendix \ref{rho_bound} for detailed steps) and since $\cosh$ is monotonically increasing, $\cosh{(\sqrt{-\kappa_l}\rho)}\leq\cosh{(2\sqrt{-\kappa_l}(\tau_m+\tau))}$, therefore
\begin{align}
    \sup_{\|\vec{u}\|=1}\|J^{(x_n)}_{\vec{u}}(1)\|\leq
    \begin{cases}
    1&, \kappa_l\geq 0\\
    \cosh{(2\sqrt{-\kappa_l}(\tau_m+\tau))} &, \kappa_l< 0.
    \end{cases}
\end{align}
As this is independent of $x_n$, we get the sensitivity as,
\begin{align}
\label{eqn:sensitivit_p}
\Delta_{p} \leq
\begin{cases}
\dfrac{2\tau}{n} &, \kappa_l \geq 0, \\[6pt]
\dfrac{2\tau}{n}\cosh(2\sqrt{-\kappa_l}(\tau_m+\tau)) &, \kappa_l < 0.
\end{cases}
\end{align}
\end{proof}
One can similarly get a bound on the sensitivity $\Delta_{\vec{v}}$ given by
\begin{align}
\label{eqn:sensitivit_v}
\Delta_{\vec{v}} \leq
\begin{cases}
\dfrac{2\tau}{n} &, \kappa_l \geq 0, \\[6pt]
\dfrac{\tau}{n}\dfrac{\sinh(2\sqrt{-\kappa_l}(\tau_m+\tau))}{\sqrt{-\kappa_l}(\tau_m+\tau)} &, \kappa_l < 0.
\end{cases}
\end{align}
Refer to appendix \ref{sup:sens_v} for the proof.

\begin{figure*}[t]
    \centering
    \begin{minipage}{\linewidth}
        \centering
        \includegraphics[width=\linewidth,height=0.32\linewidth,keepaspectratio]{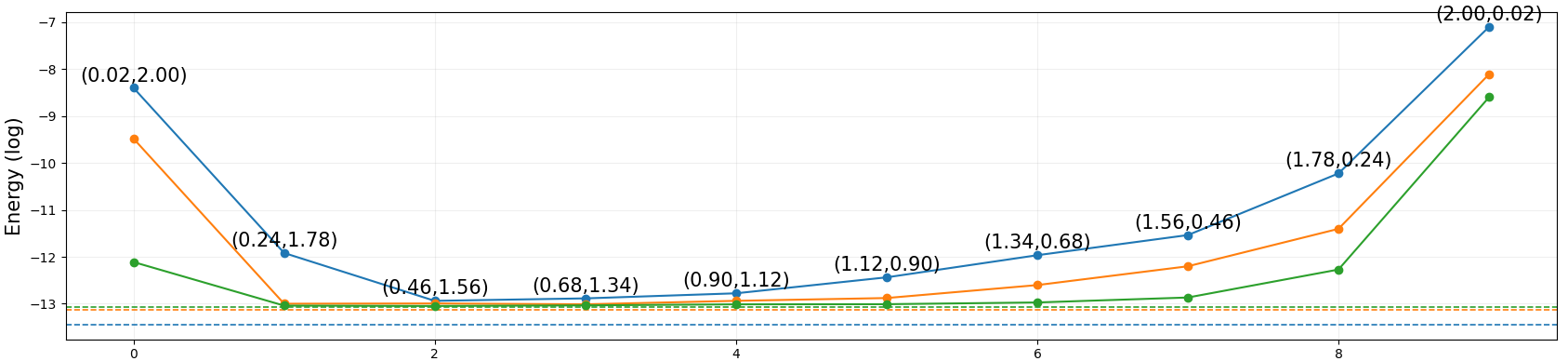}
    \end{minipage}
    \vspace{0.6em}
    \begin{minipage}{\linewidth}
        \centering
        \includegraphics[width=\linewidth,height=0.32\linewidth,keepaspectratio]{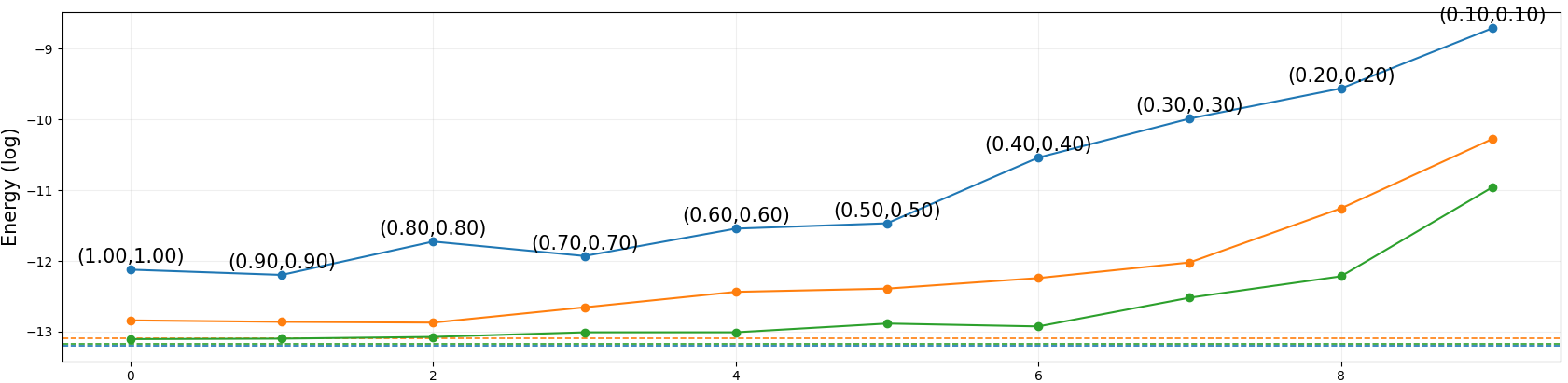}
    \end{minipage}
    \caption{Log average MSE, $\ln{\bar{\mathcal{E}}}$, for $20$ (blue), $50$ (orange), and $100$ (green) data points on the sphere. Dotted line are the energies without privatisation. Top: unequal budget splits $\epsilon_p\in[0.02,2.0]$, $\epsilon_v\in[2.0,0.02]$, total $\epsilon=2.02$ and Bottom: Equal budget splits with varying total budget $\epsilon\in[0.2,2.0]$.}
    \label{fig:sphere_energycompar}
\end{figure*}
\section{Experimental Results}\label{ss:ExperimentalSection}
All experimental results were conducted on a 16 GB RAM laptop with Jupyter notebook on Windows 11. We denote the differentially private and non-private pair of footpoint and shooting vector as $(\tilde{p},\tilde{v})$ and $(\hat{p},\hat{v})$, respectively. The non-private estimates, $(\tilde{p},\tilde{v})$, minimise the energy $E(p,v)$ and are found using the geodesic regression code from \citet{geodesicreg}. We aim to measure how the private estimates affect the energy $E(p,v;D)$. To sample from the density, we utilize Riemannian Metropolis-Hastings, the Riemannian analog of the standard MCMC algorithm. 
We elaborate on Riemannian MCMC in appendix \ref{ss:Sampling} specifically for the sphere and refer to \citet{hajri2016riemannian} for more details. 
Since exact samplers are generally unavailable on Riemannian manifolds, we use Metropolis–Hastings as a practical approximate sampler. As discussed in the Appendix \ref{app:MCMCPriv}, its approximation error can be tuned (by running the chain sufficiently long) so that it has a negligible impact on the overall privacy guarantee.

We first sample a set of DP footpoints ($\tilde{p}$), each with privacy budget $\epsilon_p$, denoted as $\mathcal{P} = \{\tilde{p}_1, \tilde{p}_2, \dots, \tilde{p}_m\}$. On the tangent space of \textit{each} footpoint \(\tilde{p}_i \in \mathcal{P}\), we sample a corresponding set of private shooting vectors $\mathcal{V}_i = \{\tilde{v}_{i1}, \tilde{v}_{i2}, \dots, \tilde{v}_{im}\}$, each $\tilde{v}$ sampled at a privacy budget of $\epsilon_v$. Note that for each of the $m$ many footpoints we sample $m$ many shooting vectors on the respective tangent spaces yielding $m^2$ many private pairs. 

We form a differentially private geodesic curve with a footpoint $\tilde{p_i}$ from the set $\mathcal{P}$ and a shooting vector $\tilde{v}_{ij}$ from the corresponding set $\mathcal{V}_i$. Each of these geodesics has a total privacy budget of $\epsilon=\epsilon_p + \epsilon_v$, by composition. Let $E_{ij} = \frac{1}{n}\sum_{k=1}^n\,d(\text{Exp}(p_i,x_k\,v_{ij}), y_k)^2$ be the Mean Squared Error (MSE) between the data points $\{y_k\}$ and the private predictions $\{\tilde{y_k}\}$ from the DP geodesic formed by the selected $\tilde{p}_i$ and $\tilde{v}_{ij}$. Next, let $\mathcal{E}_i = \{E_{i1}, E_{i2},\dots E_{im}\}$ be the set of MSE's with a fixed footpoint $p_i$ as the second index runs over the shooting vectors. Then $\bar{\mathcal{E}}_i$ is the average of such a set. 

To assess our method, we consider the overall average MSE, $\bar{\mathcal{E}} =\frac{1}{m}\sum_{i=1}^m\,\bar{\mathcal{E}}_i$ which aggregates the contributions from all sampled footpoints. We then analyze how this error changes under both equal and unequal allocations of the privacy budgets $(\epsilon_p,\epsilon_v)$.

Here we present results for the sphere and Kendall shape space. A similar analysis for the space of symmetric positive definite, $\mathrm{SPD}(2)$, matrices can be found in appendix \ref{apndx:SPD}.

\subsection{Sphere}
\begin{figure*}[t]
    \centering
    \begin{minipage}{\linewidth}
        \centering
        \includegraphics[width=\linewidth,height=0.32\linewidth,keepaspectratio]{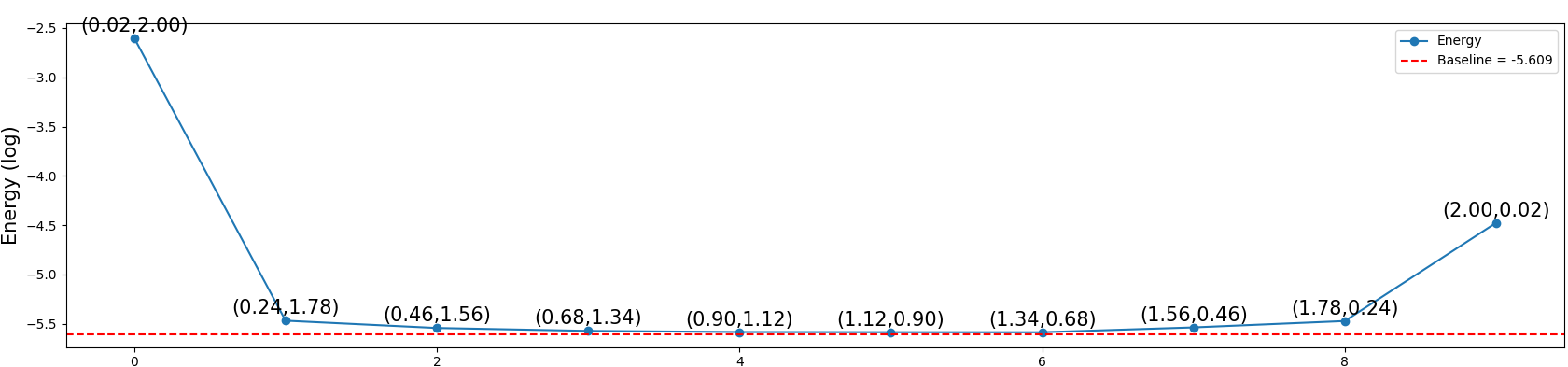}
    \end{minipage}
    \vspace{0.6em}
    \begin{minipage}{\linewidth}
        \centering
        \includegraphics[width=\linewidth,height=0.32\linewidth,keepaspectratio]{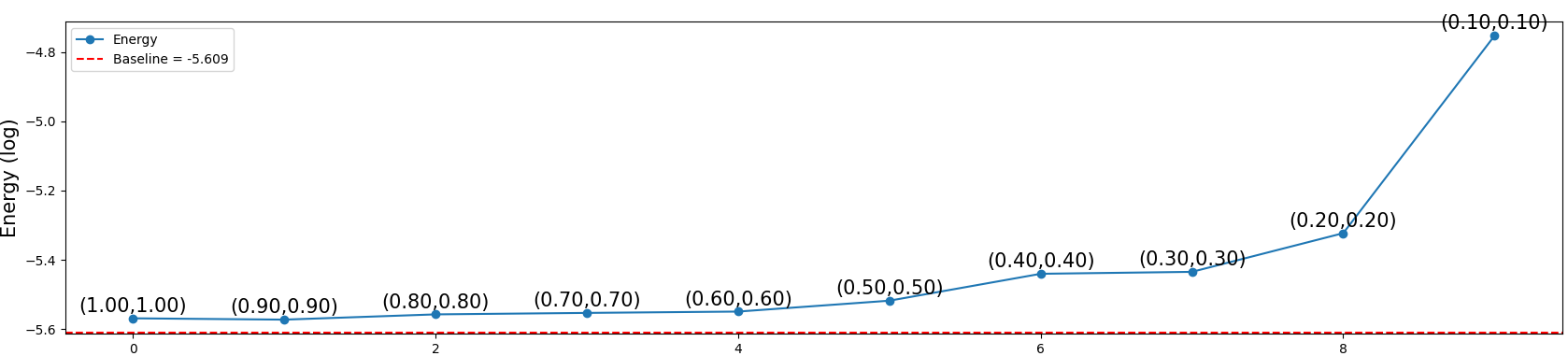}
    \end{minipage}
    \caption{Log average MSE, $\ln{\bar{\mathcal{E}}}$, for $100$ data points on $\mathbb{C}P^{k-2}$. Dotted line is the log energy without privatisation. Top: unequal budget splits $\epsilon_p\in[0.02,2.0]$, $\epsilon_v\in[2.0,0.02]$, total $\epsilon=2.02$ and Bottom: Equal budget splits with varying total budget $\epsilon\in[0.2,2.0]$.}
    \label{fig:kendal_energycompar}
\end{figure*}
\label{ss:sphere}
We create an artificial dataset of $20,50,100$ data points on a unit sphere $S_2$ by generating points on a geodesic and adding randomised errors to each point. 
That is, we parametrise a geodesic by randomly picking a point $q_0\in S_2$ and a vector $\zeta\in T_{q_0}\mcM$. We then sample the geodesic $\gamma(t)=\text{Exp}(q_0,t\zeta)$ uniformly on $t\in[0,1]$, denote these as $\{\gamma_i\}_{i=1,\dots n}$. These $n$ points lie exactly on the geodesic. 
We add noise to the data points by sampling from a multivariate normal distribution with zero mean and a small covariance matrix ($\delta\times I_{3\times3}$), ensuring independent perturbations in each dimension. For this set of experiments, we use $\delta=0.001$ and in appendix \ref{ss:SphereSpread} we show results for $\delta=0.01,0.1$. Adding this noise, results in the data points lying outside the surface of the sphere therefore, we project the data points onto the unit sphere by normalization, i.e., $x\mapsto x/\|x\|$.
We note that $q_0$ and $\zeta$ need not be parameters of geodesic regression due to the randomness of the injected noise.

For the unit sphere, the sensitivities of the footpoint and shooting vector are given by
$$
\Delta_p = \frac{2\tau}{n},\quad\Delta_v = \frac{2\tau}{n}.
$$
Figure~\ref{fig:sphere_energycompar} illustrates how the average log mean squared error, 
$\ln \bar{\mathcal{E}}$, varies with $20$, $50$, and $100$ data points, shown by the blue, 
orange, and green lines, respectively. The dashed lines in each panel correspond to the 
geodesic log energy obtained from the non-private regression estimates $(\hat{p}, \hat{v})$. 

For each privacy budget pair $(\epsilon_p, \epsilon_v)$, we sample $10$ candidate footpoints 
and $10$ shooting vectors per footpoint, resulting in $100$ private parameter pairs 
$(\tilde{p}, \tilde{v})$. The top panel corresponds to an unequal budget allocation, with 
$\epsilon_p \in [0.02,2.0]$ and $\epsilon_v \in [2.0,0.02]$ while maintaining a fixed total 
budget of $\epsilon=2.02$. As expected, the log error is largest when either $\epsilon_p$ or 
$\epsilon_v$ is extremely small, producing a parabolic trend. A qualitatively similar pattern 
appears in the bottom panel, which depicts the case of equal budget allocation, where the 
total budget $\epsilon$ decreases from $2.0$ to $0.2$. 

The increase in error at lower budgets is natural as with limited privacy allowance, the sampling 
distributions of $\tilde{p}$ and $\tilde{v}$ become heavy-tailed, leading to accepted samples 
that deviate substantially from the non-private estimates $(\hat{p}, \hat{v})$, thereby inflating 
the energy. While the private energies approach the geodesic energy of the non-private estimates, 
they never exactly attain it. As the number of data points increases, the sensitivity decreases, 
and the sampling distribution of $(\tilde{p}, \tilde{v})$ becomes more concentrated around the 
non-private estimates. This results in energies that are increasingly close to the 
non-private geodesic energy, as seen in Figure~\ref{fig:sphere_energycompar}. For each budget 
pair $(\epsilon_p, \epsilon_v)$, the log energy with $20$ data points (blue) is higher than 
with $50$ data points (orange), which in turn is higher than with $100$ data points (green).

\subsection{Kendall Shape Space}
Next, we analyze corpus callosum shapes from the Alzheimer's Disease Neuroimaging Initiative (ADNI) dataset processed by \citet{cornea2017regression}. We consider Alzheimer’s patients aged 50–90 years, each represented by 50 uniformly sampled boundary landmarks. Landmark configurations are first mapped to the preshape sphere by removing translation (centering at the centroid) and scale (normalizing to unit Frobenius norm), so that all configurations satisfy zero mean and unit size. Kendall’s shape space is formally the quotient of this preshape sphere by the action of rotations, yielding the complex projective space $\mathbb{C}P^{k-2}$. Following Fletcher’s framework, however, we never explicitly project onto $\mathbb{C}P^{k-2}$; instead, we work directly in preshape coordinates, where the exponential and logarithm maps are expressed in forms that inherently respect rotational invariance. We also perform an affine transformation on the covariates (ages) such that they are scaled to be between $[0,1]$.

The sectional curvature of Kendall shape space with landmarks $\geq 4$ is bounded between $[1,4]$ resulting in $\kappa_l = 1$. We get the sensitivities of the footpoint and shooting vector as,
$$
\Delta_p = \frac{2\tau}{n},\quad \Delta_v = \frac{2\tau}{n}.
$$
For Kendall’s shape space, we examine the log average MSE under both unequal and equal budget allocations, as shown in Figure \ref{fig:kendal_energycompar}. Similar to the case of unit sphere, for each privacy budget pair $(\epsilon_p, \epsilon_v)$, we sample $10$ candidate footpoints 
and $10$ shooting vectors per footpoint, resulting in $100$ private parameter pairs 
$(\tilde{p}, \tilde{v})$ The overall behavior closely parallels the trends observed on the unit sphere. In the case of unequal allocation, when either $\epsilon_p$ or $\epsilon_v$ is extremely small, the log error rises sharply, producing a parabolic profile. This is explained by the heavy-tailed sampling distribution at low budgets, which increases the likelihood of accepting footpoints and shooting vectors far from the true regression parameters. Under a more balanced (equal) split of the budget, the errors decrease as the total privacy budget increases, reflecting a concentration of the sampling distribution around the regression estimates. Together, these results demonstrate that while Kendall’s shape space is geometrically more complex than the unit sphere, the qualitative relationship between privacy allocation and estimation accuracy remains consistent: extreme imbalance in budget allocation leads to instability, whereas balanced splits yield more reliable regression fits. Figure \ref{fig:kendal_shapes} provides a visual comparison of predictions obtained from non-private and differentially private regression parameters under a budget split of $\epsilon_p = \epsilon_v = 0.3$. For this choice of privacy allocation, the predicted shapes from the private parameters align closely with those from the non-private regression, with only minor deviations visible across ages.
\begin{figure*}[t]
    \centering
    \includegraphics[width=\linewidth,height=0.42\linewidth,keepaspectratio]{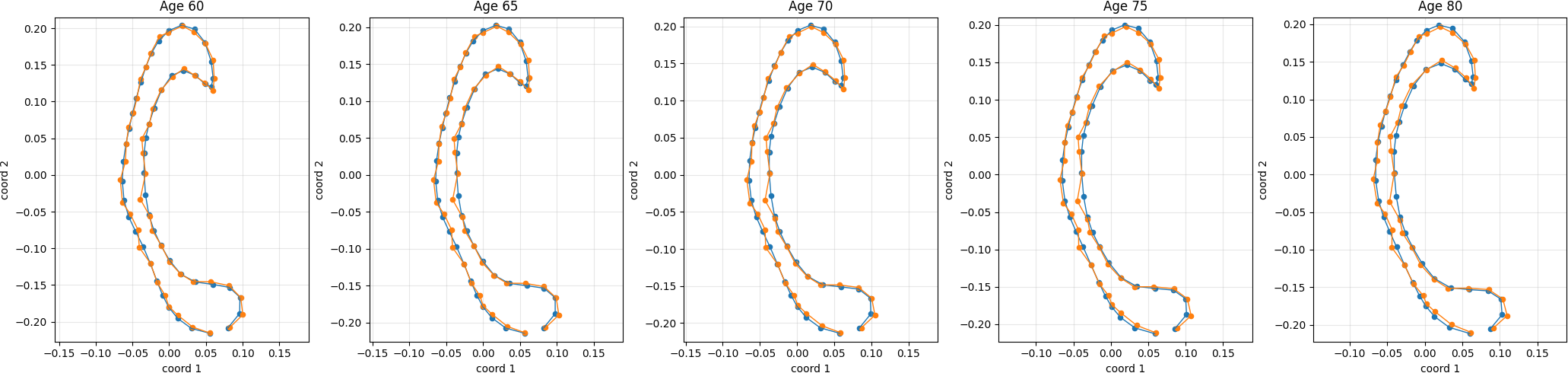}
    \caption{Corpus callosum shapes at ages $60$, $65$, $70$, $75$, and $80$. Blue curves denote predictions from the non-private regression parameters, while orange curves correspond to predictions from the differentially private parameters with $\epsilon_p = \epsilon_v = 0.3$.}
    \label{fig:kendal_shapes}
\end{figure*}

\section{Discussion}\label{ss:disc}
Geodesic regression is an extension of linear regression to Riemannian manifolds. We have developed a methodology to release the parameters of geodesic regression (the footpoint and shooting vector) in a differentially private manner. To do so, we derive a theoretical bound on the sensitivity, under the KNG mechanism, of each parameter. 
We note that the theoretical bounds on the sensitivity requires Riemannian manifolds with curvature bounded from above. This, however, is not a limitation specific to DP, as it is also necessary for the Fr\'echet mean and for the parameters of geodesic regression. We show that the sensitivity of the parameters is a function of its respective Jacobi fields, which itself is a function of the curvature of the manifold. A similar discovery was previously noted for the estimating the Fr\'echet mean by \citet{reimherr2021differential}. We thus demonstrate our methodology on the 2-sphere, Kendall's shape space, and the space of symmetric positive definite matrices over a variety of sample sizes, variety of budgets, and unequal allocation of budgets in sections \ref{ss:ExperimentalSection}, \ref{ss:SphereSpread}, and \ref{apndx:SPD}. We present empirical evidence of our sensitivity bounds by creating adjacent datasets in appendix \ref{app:Validity}.

Similar to the standard exponential mechanism in Euclidean space, it is not straightforward to sample from KNG. This difficulty is compounded by the difficulty of sampling on manifolds. We rely on MCMC which has its limitations for privacy \citep{wang2015privacy}; however, developing sampling algorithms is itself a research area outside the scope of this paper. Another problem one typically encounters is bounding the data. A usual assumption is $D\subset B_r(m)$, a ball of radius $r$ centered at $m$. For most data, one cannot know $r$ a priori. We face a similar issue as we need $\tau=\sup\|\varepsilon\|$ which we cannot know beforehand. We set $\tau=\max_i\|\varepsilon_i\|$ which indeed violates privacy as we need to look at the data, but the concept of our methodology still holds. Further, since $\sigma=2\tau/n\epsilon$ and we experiment under different $\epsilon$ we, in a sense, verify our methodology holds if we had inflated $\tau$.

There are many avenues for future research. For instance, studying how DP interacts in the framework of multiple geodesic regression, Fr\'echet regression, or Wasserstein regression are all significant contributions. Further interestingly, parametrising a geodesic poses an opportunity unique to manifolds. For instance, one can reverse the parametrisation of a geodesic $\gamma:[0,1]\rightarrow\mcM$ to $\tilde{\gamma}(t)=\gamma(1-t)$ 
One might consider such an endeavor if, for instance, the curvature of the manifold is \textit{different} at the endpoints, i.e. $\gamma(0)$ and $\gamma(1)$, thus directly affecting the sensitivity of $\tilde{p}$. Our methodology relies on a lower bound on the curvature, but if more information is known, one can possibly leverage this information. 

For baseline comparisons, we note that to date, there are no differential privacy methods for regression on manifolds. The following would be viable options, however, they each have their accompanying complexities.
In terms of output perturbation, the previously introduced manifold DP methods, Laplace \cite{reimherr2021differential} and Gaussian \cite{jiang2023gaussian}, would require determining the bounds $\Delta_p=\sup_{D\sim D'}d(p_{D},p_{D'})$ and $\Delta_v=\sup_{D\sim D'}d(v_{D},v_{D'})$. These are nontrivial, and the latter is not even well-defined, as $v_{D}$ and $ v_{D'}$ live in different tangent spaces. In terms of objective perturbation, the exponential mechanism would require us to bound $\Delta_p = \sup_{D\sim D'}\|E(p;D)- E(p;D')\|$, and a similarly defined $\Delta_v$. As this paper demonstrated, these again are not straightforward. Lastly, another option would be DP SGD \cite{han2024differentially}. DP SGD would require determining a bound on the Lipschitz constant for the energy function. All of these avenues would be worthwhile endeavors for future research projects. Lastly, we note that private linear regression projected onto the manifold is not a valid comparison. The projection of a straight Euclidean line onto a curved space is, generally speaking, not a geodesic on the manifold. Such a comparison would thus not be meaningful.

\bibliography{ref}

\newpage
\appendix
\onecolumn
\section{Sampling}\label{ss:Sampling}
This section describes how one could sample from the KNG mechanism on the unit sphere. We implement a Riemannian random walk Metropolis-Hastings algorithm to sample from the KNG mechanism. 
Suppose we have bounds on the sensitivity of $p$ and $v$, denoted $\Delta_p$ and $\Delta_v$ respectively, as in \ref{ss:Sens}. 

We have that $f(p;D)\propto \exp\{-\sigma^{-1}\|\nabla_p E(p;D)\|_p\}$. Given that $\sigma=\Delta_p/\epsilon_p$ and the gradient is as in \ref{ss:Sens},
\begin{align*} f(p;D) &\propto \exp\left\{-\frac{\|\nabla_p E(p;D)\|_p}{\Delta_p/\epsilon_p}\right\}\\
&=  \exp\left\{-\frac{\| \nabla_p 1/n \sum d^2(\text{Exp}(p,x_i v),y_i)\|_p}{2\tau/(n\epsilon_p)}\right\}\\
&=  \exp\left\{-\frac{\| \sum d_p\text{Exp}(p,x_i v)^{\dagger}\varepsilon_i\|_p}{2\tau/\epsilon_p}\right\}.
\end{align*}

We note that for a negatively curved manifold, another term is required in the denominator. To sample from this distribution, first, initialize the random walk at $p_0$ which can be any data point, the Fr\'echet mean of the points, or the sample statistic $\hat{p}$. To make a proposal we first sample a vector $\nu_{prop}$ from the tangent space, as it is simply a Euclidean plane, uniformly around $p_0$ with a radius $\eta$.  This $\eta$ is a tuning parameter that determines the `stickiness' of the MCMC chain. We set $\eta\propto \sigma$. The proposal is then $p_{prop}:=\text{Exp}(p_0,\nu_{prop})$. Typically, one would then compute the acceptance probability, however, note that the energy depends on both the shooting vector \textit{and} a footpoint. Since we are under the assumption $v$ is constant, we parallel transport $v$ to the proposal, that is, we use $\Gamma_{\hat{p}}^{p_{prop}} \hat{v}$. Lastly, compute the acceptance probability in the typical fashion $f(p_{prop};D)/f(p_0;D).$ We draw subsequent instances in the same fashion continuing the random walk from the previously accepted footpoint. We forego thinning the chain, as it has been argued that it is not a necessary procedure \citep{link2012thinning}. Furthermore, thinning decreases efficiency, which compounds with the inefficiency of sampling on manifolds.

While sampling, instead of $\tau$ we use $\max\|\varepsilon_i\|$, the empirical largest error, in the acceptance probability as the largest error in the dataset. This is not ideal as this value is data driven and thus is not private. This is unfortunately a problem often faced in DP. For instance, when sanitizing the mean of real numbers a common assumption is that the data live in a bounded ball $B_r(y)$ centered at $y$ with radius $r>0$. This ball can be determined by utilizing public information. In our setting we foresee a similar solution. For example, if one were to use geodesic regression on Earth to model the migration of birds, one could use the guidance of bird experts to select the maximum deviation.

\subsection{Privacy and MCMC}\label{app:MCMCPriv}
To sample from KNG on Riemannian manifolds we utilize the Metropolis-Hastings MCMC algorithm. There is a lot of literature on how MCMC impacts the privacy implications, see for instance \citet{bertazzi2025differential} and \citet{seeman2021exact}. Roughly speaking, a pure DP mechanism, $(\epsilon,\delta=0)$, which utilizes an approximate sampler such as Metropolis-Hastings, will satisfy approximate-DP with $\delta=O(1/M)$ where $M$ is the length of chain, under some assumptions of proper mixing. This does then weaken the privacy guarantees of our simulations. However, we note two important considerations. First, for Riemannian manifolds samplers are not as widely available as for Euclidean spaces. For Riemannian manifolds, samplers need to be developed for each manifold with its respective metric. For instance, the Laplace density on the space of symmetric positive-definite matrices with the affine invariant metric is studied in \citet{hajri2016riemannian}, but if one were to use the same space but change the metric to Bures-Wasserstein or Log-Euclidean, this would require a different set-up.
Even in Euclidean space, exact samplers for a general exponential mechanism are not guaranteed. 
Second, we note that in practice we would only release one private pair $(\tilde{p},\tilde{v})$, so being cognizant of that $\delta=O(1/M)$, one can control the impact on the privacy implications. That is, one can set $M$ arbitrarily large to arrive at a negligible $\delta$.

\section{Riemannian Geometry}\label{ss:RG}


Following \citet{nakahara2018geometry}, we elaborate on two important aspects of Riemannian geometry that are widely used in this paper.

\subsection{Parallel transport}\label{pt}
On a Riemannian manifold, the metric $g$ induces a natural volume form with the help of a local basis $\{\partial/\partial x^i\}$ given by $\text{vol}_g = \sqrt{\text{det}_g}\,dx^1\wedge dx^2\cdots\wedge dx^n$. The Riemannian measure $d\mu_g$ is derived using the volume form and allows us to integrate over the manifold,$$d\mu_g = \sqrt{\text{det}_g}\,dx^1dx^2\cdots dx^n.$$ Any subset of the manifold $A\subset \mcM$ is measurable if it belongs to the $\sigma$- algebra $\mathcal{M}$ associated with the Riemmanian measure. 

The generalization of a Euclidean shortest path, straight lines, on Riemannian manifolds is referred to as a ``geodesic."  A connection $\nabla$ on a manifold can be used to take directional derivatives and thus define a geodesic curve. A curve $\gamma:[0,1]\rightarrow\mcM$ on $\mcM$ is a geodesic curve with respect to $\nabla$ if its acceleration is zero, i.e., $\nabla\,\dot{\gamma}=0$. On a manifold with linear connection there always exists a unique geodesic which is denoted by a footpoint $p$ and shooting vector $v \in T_p\mcM$. The distance, $d(\cdot,\cdot)$, between two points $p,q\in\mcM$ is thus the length of the geodesic between them  $d(p,q):=\mcL(\gamma)=\int \|\dot\gamma(t)\|^{1/2}_{\gamma(t)}\diff t$.
 
Parallel transport plays a crucial role in proving theorem \ref{thm:lemma} and sampling replicates for the MCMC chain. Unlike Euclidean space comparing two vectors on a general manifold $\mathcal{M}$ becomes more challenging as the vector can lie on different tangent spaces of $\mathcal{M}$. Consider two points on the manifold close to each other, $x$, $x+\delta x$. We can have a vector field on the tangent space of $x$ given by $V = V^\mu\,e_\mu$ where $e_\mu$ is the local basis and $V_\mu$ are the vector components. In Euclidean space the derivative with respect to $x^\nu$ is given by:
\begin{align}
    \frac{\partial V\mu}{\partial x^\nu}&= \lim_{\delta x\rightarrow 0}\,\frac{V^\mu(\cdots,x^\nu+\delta x^\nu,\cdots) - V^\mu(\cdots,x^\nu,\cdots)}{\delta x^\nu}
\end{align}
On a general manifold, we need to transport $V^\mu(x)$ to $x+\delta x$ to perform the above subtraction. Denote a vector $V(x)$ transported to $x+\delta x$ as $\tilde{V}(x+\delta x)$ and satisfies the following conditions,
\begin{align}
    \tilde{V}^\mu(x+\delta x) - V^\mu(x)\,\propto \delta x\\
    \widetilde{V^\mu+W^\mu} = \tilde{V}^\mu(x+\delta x)+W^\mu(x+\delta x),
\end{align}
where $W = W^\mu e_\mu$ is another vector field on $x$. A transport is called \textit{parallel transport} if the above conditions are satisfied. If we take $\tilde{V}^\mu(x+\delta x) = V^\mu(x) - V^\lambda(x)\Gamma^\mu_{\nu\lambda}\delta x^\nu$, the above rules are satisfied. There are distinct rules of parallel transport and each one is written with a choice of connection, $\Gamma$. For a manifold with a metric, there is a preferred choice of $\Gamma$ called as Levi-Civita connection to define the parallel transport. Using the connection we can thus define a covariant derivative which is similar to a directional derivative in Euclidean space as,
\begin{align}
    \nabla_\mu V^\lambda &= \frac{\partial W^\lambda}{\partial x^\mu} + \Gamma^\lambda_{\mu\nu}W^\nu,
\end{align}
where $\nabla_\mu W^\lambda$ is the $\lambda$th component of a vector $\nabla_\mu W$.

An important theorem in Riemannian geometry is the Hopf-Rinow theorem, which states that:
\begin{theorem}
    Let $(\mcM,g)$ be a connected finite-dimensional Riemannian manifold and
let $d$ be the distance induced by $g$. The following are equivalent:
\begin{enumerate}
    \item $(\mcM,d)$ is a complete metric space.
    \item Every closed and bounded subset of $\mcM$ is compact.
    \item $\mcM$ is geodesically complete, that is, every geodesic
    $\gamma:(a,b)\to \mcM$ can be extended to a geodesic defined on
    all of $\mathbb{R}$.
\end{enumerate}
Moreover, if these conditions hold, then for any $p,q\in \mcM$ there exists
a minimizing geodesic segment $\gamma:[0,1]\to \mcM$ such that
\[
   \gamma(0)=p,\qquad \gamma(1)=q,\qquad
   d(p,q)=\operatorname{length}(\gamma).
\]
\end{theorem}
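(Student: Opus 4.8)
The plan is to establish the cyclic chain of implications $(3) \Rightarrow (2) \Rightarrow (1) \Rightarrow (3)$ and to obtain the ``moreover'' clause as a byproduct of the central lemma used along the way. The heart of the matter is a single geometric lemma: if the exponential map $\text{Exp}(p,\cdot)$ is defined on all of $T_p\mcM$ (i.e.\ geodesics emanating from $p$ extend for all time), then every $q \in \mcM$ is joined to $p$ by a minimizing geodesic of length $d(p,q)$. I would prove this lemma first, since it is reused throughout, and then the three implications follow with comparatively little effort.

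For the lemma, set $r = d(p,q)$ and fix a small $\delta > 0$ so that the geodesic sphere $S_\delta(p) = \{\text{Exp}(p,\delta w) : \|w\| = 1\}$ lies inside a normal ball about $p$. By compactness of the unit sphere in $T_p\mcM$, I choose a unit vector $w_0$ minimizing $x \mapsto d(x,q)$ over $S_\delta(p)$, and set $\gamma(t) = \text{Exp}(p,t\,w_0)$, which is defined for all $t$ by hypothesis. Define $A = \{t \in [0,r] : d(\gamma(t),q) = r - t\}$. I would show $A$ is nonempty (it contains a starting interval, using that every path from $p$ to $q$ must cross $S_\delta(p)$, so $d(\gamma(\delta),q) = r-\delta$) and closed, and that $t_0 := \sup A$ equals $r$, whence $\gamma(r) = q$ and $\gamma|_{[0,r]}$ is the required minimizer. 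Forcing $t_0 = r$ is the delicate step: assuming $t_0 < r$, I surround $p' := \gamma(t_0)$ by its own small geodesic sphere, pick the point $x'$ on it nearest $q$, and use that any curve from $p'$ to $q$ must cross this sphere to obtain $d(p',q) = \delta' + d(x',q)$; the broken path $p \to p' \to x'$ then has length $t_0 + \delta' = d(p,x')$, so it is minimizing, hence a smooth unbroken geodesic, giving $t_0 + \delta' \in A$ and contradicting the definition of $t_0$.

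With the lemma in hand, the equivalences are routine. For $(3) \Rightarrow (2)$, a closed bounded set $K$ with $\sup_{x\in K} d(p,x) \le R$ is contained in $\text{Exp}(p,\overline{B}_R(0))$ by the lemma; this is the continuous image of a compact ball, hence compact, and $K$ is a closed subset of a compact set. For $(2) \Rightarrow (1)$, any Cauchy sequence is bounded, thus lies in a compact set, so it has a convergent subsequence and therefore converges. For $(1) \Rightarrow (3)$, I argue by contradiction: a unit-speed geodesic maximally defined on $[0,b)$ with $b < \infty$ satisfies $d(\gamma(s),\gamma(t)) \le |s-t|$, so $\gamma(t_n)$ is Cauchy as $t_n \uparrow b$ and converges to some $x$; invoking the uniform lower bound on the existence time of geodesics in a neighborhood of $x$ lets me extend $\gamma$ past $b$, a contradiction. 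Finally, the ``moreover'' clause is immediate, since under condition $(3)$ the exponential map at any $p$ is globally defined, so the central lemma furnishes a minimizing geodesic between any $p$ and $q$.

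I expect the main obstacle to be the supremum step inside the lemma, namely ruling out $t_0 < r$. This is where essentially all the Riemannian content is concentrated: it relies on the minimizing property of radial geodesics within small normal balls, on the observation that a path realizing the distance between its endpoints is necessarily a geodesic, and on careful bookkeeping with the triangle inequality. The remaining implications are soft point-set-topology and ODE-extension arguments once this lemma is secured.
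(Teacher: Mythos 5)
The statement you were asked to prove is the classical Hopf--Rinow theorem, which the paper states in its background appendix \emph{without proof}, as standard material cited from texts such as do Carmo; so there is no paper proof to compare against. Your sketch is the standard textbook argument and it is correct in all essentials: the central lemma (global definition of $\mathrm{Exp}(p,\cdot)$ implies every $q$ is joined to $p$ by a minimizing geodesic) proved via the set $A=\{t\in[0,r]: d(\gamma(t),q)=r-t\}$ and the sup-argument, followed by the soft implications $(3)\Rightarrow(2)\Rightarrow(1)\Rightarrow(3)$, is exactly do Carmo's proof. The two places where care is genuinely needed are the ones you correctly flagged: (i) in the sup step, concluding that the minimizing broken path $p\to p'\to x'$ has no corner requires the fact that any curve realizing the distance between its endpoints is, after reparametrization, a smooth geodesic (first-variation/corner-smoothing argument), after which uniqueness of geodesics with given initial data identifies it with $\gamma$ and yields $t_0+\delta'\in A$; and (ii) in $(1)\Rightarrow(3)$, extending past the finite endpoint $b$ requires a uniform lower bound on the existence time of unit-speed geodesics over a compact neighborhood of the limit point $x$ (compactness of the unit sphere bundle over that neighborhood plus smooth dependence of ODE solutions on initial conditions), which you invoke appropriately. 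Minor trivia you may add for completeness: handle the degenerate case $p=q$ in the lemma, and note that the nonemptiness of $A$ uses $\delta<r$ so that $q$ lies outside the normal ball.
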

The fact that a minimizing geodesic between two points in a strongly convex ball is unique ensures that parallel transport is unique. This is given by the following theorem.
\begin{theorem}
Suppose $p,q \in B_r(m_0)$. Then there exists a unique minimizing geodesic 
$\gamma:[0,1]\to\mathcal{M}$ that joins $p$ to $q$. Parallel transport along 
$\gamma$ with respect to the Levi--Civita connection therefore defines a unique 
linear isometry
\[
   \Gamma_{q}^ p : T_{q}\mathcal{M} \longrightarrow T_p\mathcal{M}.
\]
In particular, for any $v\in T_p\mathcal{M}$ and $w\in T_{q}\mathcal{M}$, the quantity
\[
   \bigl\| v - \Gamma_{q}^p(w) \bigr\|
\]
is well-defined, depends only on the pair $(p,q)$, and is independent of any 
choice of paths.
\end{theorem}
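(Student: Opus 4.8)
The plan is to decompose the statement into three claims and handle them in order: (i) existence and uniqueness of the minimizing geodesic joining $p$ and $q$, (ii) that parallel transport along it is a well-defined linear isometry, and (iii) that the displayed norm is therefore path-independent. The conceptual key is that Assumption \ref{assump2} forces $B_r(m_0)$ to be a \emph{strongly convex} ball, which is exactly what removes all the ambiguity.

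First I would establish (i). For existence, the Hopf--Rinow theorem stated above guarantees (under completeness of $\mcM$) that any two points are joined by at least one minimizing geodesic. To upgrade this to uniqueness I would show that the radius bound in Assumption \ref{assump2} places the whole ball strictly inside the convexity radius of $\mcM$. In the positively curved case one uses the comparison-geometry estimate that the convexity radius at $m_0$ is bounded below by $\tfrac12\min\{\mathrm{inj}(m_0),\pi/\sqrt{\kappa_h}\}$; since $r\le \pi/(8\sqrt{\kappa_h})$ the geodesic diameter of the ball satisfies $2r\le \pi/(4\sqrt{\kappa_h})<\pi/\sqrt{\kappa_h}$, so no conjugate points occur along geodesics inside the ball and the exponential map is a diffeomorphism there, making $\mathrm{Log}(p,q)$ single-valued. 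In the nonpositively curved case the bound $r<\tau_m$ keeps us within the injectivity radius, and a Cartan--Hadamard type argument yields uniqueness directly. Either way the minimizing geodesic $\gamma(t)=\Exp(p,t\,\mathrm{Log}(p,q))$ is unique and stays in $B_r(m_0)$.

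Given this unique $\gamma$, claim (ii) is then largely mechanical. Parallel transport of $w\in T_q\mcM$ is the solution of the linear first-order ODE $\nabla_{\dot\gamma}W=0$ with initial value $W(0)=w$; standard linear ODE theory gives a unique solution along $\gamma$, hence a well-defined map $\Gamma_q^p:T_q\mcM\to T_p\mcM$, and linearity is inherited from linearity of the equation. That it is an isometry follows from metric-compatibility of the Levi--Civita connection: for parallel fields $W_1,W_2$ one has $\tfrac{d}{dt}\langle W_1,W_2\rangle=\langle\nabla_{\dot\gamma}W_1,W_2\rangle+\langle W_1,\nabla_{\dot\gamma}W_2\rangle=0$, so inner products, and in particular norms, are preserved along $\gamma$.

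For (iii), the displayed quantity is well-defined precisely because the geodesic realizing the transport is unique: although parallel transport is path-dependent on a general curved manifold, here there is a single admissible connecting path, so $\Gamma_q^p$, and therefore $\|v-\Gamma_q^p(w)\|$, depend only on the endpoints $p$ and $q$ and not on any auxiliary choice. I expect the genuine obstacle to be the first step, namely rigorously deducing \emph{uniqueness} (not just existence) of the minimizing geodesic from the radius bound; this is where Assumption \ref{assump1} does its real work, since the upper curvature bound $\kappa_h$ is what controls the convexity radius via comparison geometry. The ODE solvability and the isometry computation in step (ii) are then routine.
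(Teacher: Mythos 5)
Your steps (ii) and (iii) are correct and coincide with the paper's own justification: the paper, in the paragraph following the theorem, also obtains the isometry property from metric compatibility of the Levi--Civita connection and path-independence from uniqueness of the connecting geodesic. The divergence is in step (i). The paper does not actually \emph{prove} uniqueness of the minimizing geodesic from Assumptions \ref{assump1} and \ref{assump2}; it explicitly \emph{assumes} that the data lie in a geodesically (strongly) convex ball and cites the standard consequences. You instead try to derive strong convexity from the curvature and radius bounds, and that derivation has a genuine gap. The convexity-radius estimate you invoke has the form $r_{\mathrm{conv}} \geq \tfrac{1}{2}\min\{\mathrm{inj}(m_0), \pi/\sqrt{\kappa_h}\}$, and the upper curvature bound $\kappa_h$ controls only the conjugate-point term $\pi/\sqrt{\kappa_h}$. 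Nothing in Assumption \ref{assump1} or \ref{assump2} bounds $\mathrm{inj}(m_0)$ from below: short closed geodesics can make the injectivity radius arbitrarily small while every sectional curvature stays in $(\kappa_l,\kappa_h)$ (flat cylinders or tori with small systole, lens spaces, thin parts of hyperbolic surfaces). Consequently your inference ``$2r \leq \pi/(4\sqrt{\kappa_h})$, so no conjugate points occur, so the exponential map is a diffeomorphism and $\mathrm{Log}(p,q)$ is single-valued'' fails at the second step: absence of conjugate points makes $\Exp$ a local diffeomorphism (an immersion), not an injective map, so two distinct minimizing geodesics between $p$ and $q$ are not excluded.

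The same issue recurs in your negatively curved case: a ``Cartan--Hadamard type argument'' requires simple connectivity, which is not assumed, and the bound $r < \tau_m$ only yields uniqueness if $\tau_m$ is taken to be (at most) the injectivity/convexity radius --- the paper never defines it as such. To close the gap you would need either an explicit additional hypothesis bounding the injectivity radius below (after which the Whitehead-type convexity estimate does finish your argument), or you should do what the paper does: take strong convexity of $B_r(m_0)$ as a standing assumption and treat the theorem as the standard package of consequences (unique minimizing geodesic in a strongly convex ball, plus the linear-ODE and metric-compatibility facts you correctly supplied). To your credit, you flagged step (i) as the real obstacle; the miss is in believing the curvature upper bound alone controls the convexity radius, when it controls only half of it.
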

In our work, we assume that the data lies in a geodesically convex ball and therefore, transporting vectors from $T_{q}\mcM$ to $T_p\mcM$ along the unique minimizing geodesic $\gamma$ yields a uniquely determined linear map
\(
\Gamma_{q  }^p : T_{q}\mcM \to T_p\mcM.
\)
Since the Levi--Civita connection is metric compatible and torsion free, parallel transport preserves the Riemannian inner product and hence the induced norm, so $\Gamma_{q}^p$ is an isometry. It follows that for any $v \in T_p\mcM$ and $w \in T_{q}\mcM$, the difference
\(
v - \Gamma_{q }^p(w)
\)
is unambiguously defined in $T_p\mcM$, and its norm $\| v - \Gamma_{q}^p(w) \|$ does not depend on any choice of path other than the minimizing geodesic.

\subsection{Riemannian curvature tensor}\label{ss:Rcurv}
The geometric meaning of the curvature of a manifold and the Riemann curvature tensor is understood by parallel transporting a vector $V_0$ at $p$ to a different point $q$ along two distinct curves $C$ and $C'$. One can notice in Figure \ref{fig:curvature} that the resulting two vectors are different from each other. This non-integrability of parallel transport defines the intrinsic notion of curvature of a manifold.

\begin{figure}[h]
    \centering
    \includegraphics[width=0.4\linewidth]{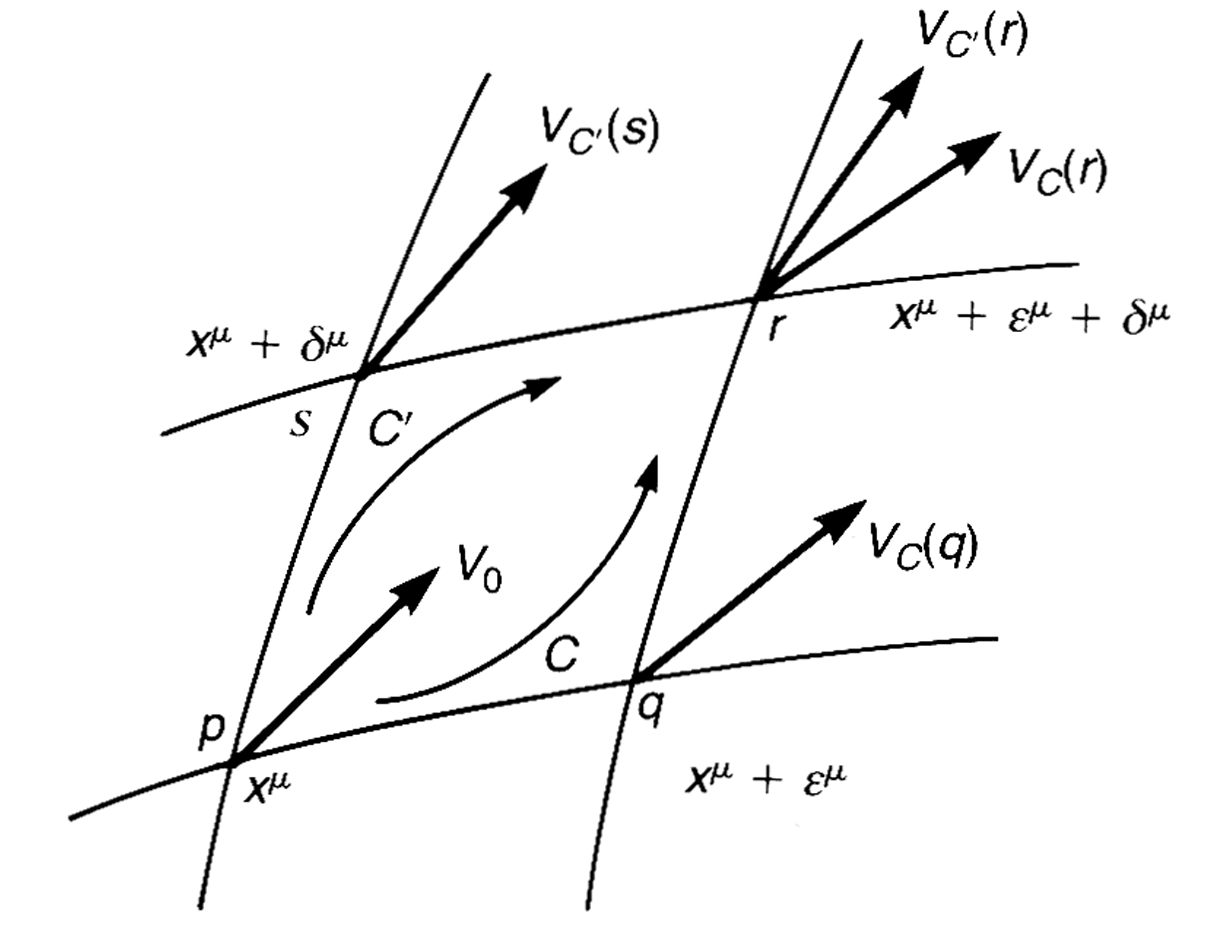}
    \caption{The vector $V_0$ is parallel transported along the curves $C$ and $C'$ resulting in $V_C(r)$ and $V_{C'}(r)$. The difference between these resulting vectors at the point $r$ represents the curvature of the manifold \citep{nakahara2018geometry}.}
    \label{fig:curvature}
\end{figure}

Take four points on a manifold defined by the vertices of an infinitesimal parallelogram, $p\equiv x^\mu, q\equiv x^\mu+\epsilon^\mu, s\equiv x^\mu+\delta^\mu, r\equiv x^\mu+\delta^\mu+\epsilon^\mu$. We can parallel transport a vector $V_0$ along two curves defined by $C=p\,q\,r$ and $C'=p\,s\,r$. The resulting vectors $V_C^\mu(r)$ and $V_{C'}^\mu$ can be written in terms of the original vector $V_0\in T_p\mcM$ as,
\begin{align}
    V_C^\mu(r) &= V_0^\mu - V_0^\kappa\Gamma^\mu_{\nu\kappa}(p)\epsilon^\nu-V_0^\kappa\Gamma^\mu_{\nu\kappa}(p)\delta^\nu - V_0^\kappa(\partial_\lambda\Gamma^\mu_{\nu\kappa(p)} - \Gamma^\rho_{\lambda\kappa}(p)\Gamma_{\nu\rho}(p))\epsilon^\lambda\delta^\nu\\
    V_{C'}^\mu(r) &= V_0^\mu - V_0^\kappa\Gamma^\mu_{\nu\kappa}(p)\delta^\nu-V_0^\kappa\Gamma^\mu_{\nu\kappa}(p)\epsilon^\nu - V_0^\kappa(\partial_\nu\Gamma^\mu_{\lambda\kappa(p)} - \Gamma^\rho_{\nu\kappa}(p)\Gamma_{\lambda\rho}(p))\epsilon^\lambda\delta^\nu.
\end{align}
Once we have parallel transported the vectors on the tangent space of $r$ we can quantify their difference as 
\begin{align}
    V_{C'}(r) - V_C(r) &= V_0^\kappa(\partial_\lambda\Gamma^\mu_{\nu\kappa}(p) - \partial_\nu\Gamma^\mu_{\nu\kappa}(p)-\Gamma^\rho_{\lambda\kappa}(p)\Gamma^\mu_{\nu\rho}(p)+\Gamma^\rho_{\nu\kappa}(p)\Gamma^\rho_{\lambda\rho}(p))\epsilon^\lambda\delta^\nu\\
    &= V_0^\kappa R^\mu_{\kappa\lambda\nu}\epsilon^\lambda\delta^\nu.
\end{align}
The Riemann curvature tensor ($R^\mu_{\kappa\lambda\nu}$) is defined as this difference and represents the curvature.

\subsection{Proof for Gradient of Energies}\label{proofgrad}
We will prove
\begin{equation}\label{eq:main}
\nabla_p E(p,v)\;=\;-\frac{1}{n}\sum_{i=1}^n \big(d_p\text{Exp}(p,x_i v)\big)^\dagger\,\varepsilon_i,
\qquad
\varepsilon_i:=\text{Log}(\eta_i, y_i),\ \ \eta_i:=\text{Exp}(p,x_i v),
\end{equation}
where $\dagger$ denotes the adjoint operator. For a linear map
$A:T_p\mcM\to T_\eta \mcM$ we define $A^\dagger:T_\eta M\to T_p\mcM$ by
\[
\langle Au,\,w\rangle_{\eta}\;=\;\langle u,\,A^\dagger w\rangle_{p}
\quad\text{for all }u\in T_p\mcM,\ w\in T_\eta M.
\]
Given the smooth function $F(\eta) = \frac{1}{2}d^2(\eta, y)$, one can easily prove that $\nabla_\eta F(\eta) = -\text{Log}(\eta, y)$. We will also use the fact that the Riemannian gradient is that which is uniquely defined in terms of the directional derivative as:
\begin{align}
\left.\frac{d}{ds}\right|_{s=0}F(\eta(s))\;=\;dF(\eta)[w]\;=\;\langle \nabla F(\eta),\,w\rangle.\label{grad}
\end{align}
This follows from differentiating a function $F: \mcM\rightarrow\mathbb{R}$ along a curve $\eta(s)$ with $\dot{\eta}(0) = w$.

Next, for fixed $p\in \mcM$ and $v\in T_p\mcM$, consider the geodesic $\gamma(t)=\text{Exp}(p,tv)$.
Let $p_s:=p(s)$ be a smooth curve with $p_0=p$ and $\dot p_0=u\in T_p\mcM$.
Further, let $v_s\in T_{p_s}\mcM$ be the parallel transport of $v$ along $p_s$ (so $\nabla_s v_s|_{s=0}=0$).
Define the variation by geodesics
\[
c(s,t)\;=\;\text{Exp}\big(p_s,\,t\,v_s\big).
\]
Then $J(t):=\partial_s c(0,t)$ is a Jacobi field along $\gamma$ with initial conditions $J(0)=u,J'(0)=0$ and
\begin{align}
    d_p\text{Exp}(p,v)[u]\;=\;\partial_s c(0,1)\;=\;J(1).\label{jac3}
\end{align}
We want the change in geodesic energy if we perturb $p$ along $p_s$. Using $\eta_i(s) = \text{Exp}(p_s,x_i v)$ together with 
the chain rule and equation \ref{grad} we obtain,
\[
\left. \nabla_p E = \frac{1}{2n}\frac{d}{ds}\right|_{s=0}d\big(\eta_i(s),y_i\big)^2
=\frac{1}{2n}\big\langle \nabla_\eta d(\eta,y_i)^2\big|_{\eta=\eta_i(0)},\,\dot\eta_i(0)\big\rangle
=\frac{1}{n}\big\langle -\varepsilon_i,\,\dot\eta_i(0)\big\rangle_{\eta_i}
\]

where $\dot\eta_i(0)=d_p\text{Exp}(p,x_i v)[u]$.
Hence,
\[
\left.\frac{d}{ds}\right|_{s=0}\frac12 d\big(\eta_i(s),y_i\big)^2
= \big\langle -\varepsilon_i,\, d_p\text{Exp}(p,x_i v)[u]\big\rangle_{{\eta_i}}.
\]
Summing over $i$ we have,
\[
\left.\frac{d}{ds}\right|_{s=0}E(p_s,v)
= \frac{1}{n}\sum_{i=1}^n \big\langle -\varepsilon_i,\, d_p\text{Exp}(p,x_i v)[u]\big\rangle
= \frac{1}{n}\Big\langle u,\ \sum_{i=1}^n \big(d_p\text{Exp}(p,x_i v)\big)^\dagger(-\varepsilon_i)\Big\rangle_{p},
\]
where the last equality is the definition of the adjoint.
Because this holds for every $u\in T_p\mcM$, the Riemannian gradient at $p$ is
\begin{align}
\nabla_p E(p,v)\;=\;-\frac{1}{n}\sum_{i=1}^n \big(d_p\text{Exp}(p,x_i v)\big)^\dagger\,\varepsilon_i,
\end{align}
as claimed. One can follow similar steps for the gradient with respect to $v$ to derive,
\begin{align}
    \nabla_v E(p,v)\;=\;-\frac{1}{n}\sum_{i=1}^n x_i\big(d_v\text{Exp}(p,x_i v)\big)^\dagger\,\varepsilon_i.
\end{align}
\section{Bound on geodesic length $\rho$}\label{rho_bound}

Let $\mcM$ be a Riemmanian manifold with sectional curvature bounded as $\kappa_l\leq\kappa\leq\kappa_h$ and $\kappa_l<0$. $\{(x_i,y_i)\}_{i=1}^N$ be data with $y_i\in \mcM$ and suppose there exists
$m_0\in \mcM$ and $\tau_m>0$ such that
\[
y_i \in B_{\tau_m}(m_0)\qquad\text{for all }i.
\]
Before performing the regression task we reparameterise the predictors so that for some index $k$ we have $x_k=0$ .
Consider the geodesic regression model
\[
\hat y_i \;=\; \operatorname{Exp}(\hat{p}, x_i \hat{v}) ,
\]
and assume the fit is $\tau$-close:
\[
d(\hat y_i, y_i) \;\le\; \tau \qquad \text{for all }i.
\]
Define the geodesic reach from the intercept
\[
\rho \;:=\; \max_i \, d\!\big (\hat{p}, \hat y_i\big).
\]

Since $x_k=0$, we have $\hat y_k=\operatorname{Exp}(\hat p,0)=\hat{p}$. By $\tau$-closeness,
$d(\hat{p}, y_k) = d(\hat y_k, y_k) \le \tau$. Because $y_k\in B_{\tau_m}(m_0)$, we have
$d(y_k, m_0)\le \tau_m$. The triangle inequality gives
\[
d(\hat{p}, m_0) \;\le\; d(\hat{p}, y_k)+d(y_k, m_0) \;\le\; \tau+\tau_m. \quad\qedhere
\]

Now, fix any $i$. By the triangle inequality and the $\tau$-closeness assumption,
\begin{align*}
d\!\big(\hat{p}, \hat y_i\big)
&\le d(\hat{p}, y_i) + d(y_i, \hat y_i) \\
&\le \big( d(\hat{p}, m_0) + d(m_0, y_i) \big) + \tau \\
&\le (\tau+\tau_m) + \tau_m + \tau \qquad \\
&= 2(\tau_m+\tau).
\end{align*}
Taking the maximum over $i$ yields $\rho \le 2(\tau_m+\tau)$.
\section{Sensitivity bound for the shooting vector}\label{sup:sens_v}

In section \ref{ss:Sens} it was shown how to bound the sensitivity for the footpoint. In this section, we give the theorem with proof to bound the sensitivity for the shooting vector. As before we will use the KNG mechanism focusing on the shooting vector. If the assumptions \ref{assump1} and \ref{assump2} are satisfied, the bound on the sensitivity $\Delta_v$ is given by:

\begin{theorem}\label{thm:lemma2} Let Assumptions \ref{assump1} and \ref{assump2} hold. Let $D,D'$ be adjacent datasets, for a fixed shooting vector $v$, 
    \begin{align*}
        \Delta_{\vec{v}} &= \sup_{D\sim D'}\|\nabla_{\vec{v}} E(p;D)-\nabla_{\vec{v}} E(p;D')\|, \\
        & \leq \frac{2\,\tau}{n}, \kappa_l\geq 0,\\
        & \leq \frac{2\,\tau}{n}\frac{\sinh{(\sqrt{-\kappa_l}\tau)}}{\sqrt{-\kappa_l}}, \kappa_l< 0
    \end{align*}
\end{theorem}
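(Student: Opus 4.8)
The plan is to mirror the footpoint argument of Theorem~\ref{thm:lemma}, substituting the shooting-vector gradient \eqref{eqn:grad_v} for the footpoint gradient and then tracking the two structural differences: the explicit factor $x_i$ and the Jacobi initial conditions. First I would write the global sensitivity $\Delta_v = \sup_{D\sim D'}\|\nabla_v E(p;D) - \nabla_v E(p;D')\|$ and substitute $\nabla_v E(p;D) = -\tfrac{1}{n}\sum_i x_i\, d_v\Exp(p,x_i v)^{\dagger}\varepsilon_i$. Since $D$ and $D'$ are adjacent, every summand cancels except the one indexed by the differing observation, leaving a difference of two terms $\tfrac{1}{n}\|x_n\, d_v\Exp(p,x_n v)^{\dagger}\varepsilon_n - x_n'\, d_v\Exp(p,x_n' v)^{\dagger}\varepsilon_n'\|$.

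Next I would apply the triangle inequality to split this into two nonnegative contributions and bound each by $|x_n|\,\|d_v\Exp(p,x_n v)\|_{op}\,\|\varepsilon_n\|$, using $\|A\varepsilon\|\le\|A\|_{op}\|\varepsilon\|$ and the invariance of the operator norm under the adjoint. Because the covariates are rescaled to $[0,1]$ we have $|x_n|\le 1$, so this factor drops out, and Assumption~\ref{assump2} lets me replace $\|\varepsilon_n\|$ by $\tau$. The operator norm is then identified with $\sup_{\|u\|=1}\|J^{(x_n)}_u(1)\|$, where by \eqref{jac2} the relevant Jacobi field now satisfies $J(0)=0,\ J'(0)=u$ — this is the decisive departure from the footpoint case, in which $J(0)=u,\ J'(0)=0$.

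The core of the proof is to bound $\sup_{\|u\|=1}\|J_u(1)\|$ on the constant-curvature model of curvature $\kappa_l$ supplied by the Rauch Comparison Theorem~\ref{thm:rauch}. Decomposing $u = u_{||}+u_\perp$ relative to $\dot\gamma$, the tangential part obeys $J_{||}''=0$, whence $\|J_{||}(1)\|=\|u_{||}\|$, while the normal part obeys the scalar equation $J_\perp'' + \kappa_l\rho^2 J_\perp = 0$ with $\rho=\|x_n v\|$ the geodesic speed. For $\kappa_l\ge 0$ the normal solution is $\sin(\sqrt{\kappa_l}\rho t)/(\sqrt{\kappa_l}\rho)$ (and $t$ when $\kappa_l=0$), which is $\le 1$ at $t=1$, so the supremum is $1$ and $\Delta_v\le 2\tau/n$. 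For $\kappa_l<0$ the increasing solution $\sinh(\sqrt{-\kappa_l}\rho t)/(\sqrt{-\kappa_l}\rho)$ dominates, and since it exceeds $1$ the supremum over the unit sphere is attained at $u_\perp$, giving $\sinh(\sqrt{-\kappa_l}\rho)/(\sqrt{-\kappa_l}\rho)$.

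I expect the main obstacle to be the arc-length bookkeeping in the normal component: because $\gamma(t)=\Exp(p,t x_n v)$ is parametrized on $[0,1]$ with speed $\rho$ rather than by unit arc length, the curvature term carries the factor $\rho^2$ and the derivative $J'(0)=u$ is taken in $t$, so care is needed to land on the correct prefactor $1/(\sqrt{-\kappa_l}\rho)$. Once this is settled I would invoke the bound $\rho\le 2(\tau_m+\tau)$ from Appendix~\ref{rho_bound} together with the monotonicity of $x\mapsto\sinh(x)/x$ to pass to the curvature-dependent constant, and then add the two triangle-inequality contributions (each carrying a factor $\tau$) to recover the overall factor of $2/n$ and obtain the claimed bound.
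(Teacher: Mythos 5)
Your proposal is correct and follows essentially the same route as the paper's proof: the single-term cancellation for adjacent datasets, the triangle inequality, the adjoint-invariant operator-norm bound with $|x_n|\le 1$ and $\|\varepsilon_n\|\le\tau$, the identification $\|d_v\Exp(p,x_n v)\|_{op}=\sup_{\|u\|=1}\|J_u(1)\|$ under the initial conditions $J(0)=0,\ J'(0)=u$, the Rauch comparison against the constant-curvature model $\kappa_l$, and the tangential/normal decomposition yielding $\max\bigl(1,\,S_{\kappa_l}(\rho)/\rho\bigr)$ with $\rho\le 2(\tau_m+\tau)$ from Appendix~\ref{rho_bound}. Your two points of extra care — writing the normal Jacobi equation with the explicit speed factor $\rho^2$ in the $t$-parametrization, and invoking the monotonicity of $x\mapsto\sinh(x)/x$ rather than of $\sinh$ alone — are, if anything, slightly more precise than the paper's wording, and your final expression $\frac{\tau}{n}\,\sinh\bigl(2\sqrt{-\kappa_l}(\tau_m+\tau)\bigr)/\bigl(\sqrt{-\kappa_l}(\tau_m+\tau)\bigr)$ matches the bound the paper's proof actually derives (equation~\eqref{eqn:sensitivit_v}), rather than the differently written $\kappa_l<0$ display in the theorem statement, which appears to be a typo.
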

\begin{proof}
    
We will start with fixing the footpoint $p$  and focus on the shooting vector, $\vec{v}$. The global sensitivity is
\begin{align}
    \Delta_{\vec{v}} &= \sup_{D\sim D'}\|\nabla_{\vec{v}} E(p;D)-\nabla_{\vec{v}} E(p;D')\|.
    \label{eqn:sens_def}
\end{align}
Using the chain rule, the gradient of the energy is given by,
$\nabla_{\vec{v}} E(p;D) = -\frac{1}{n}\sum_{i=1}^n\,x_i\,d_{\vec{v}}\text{Exp}(p,x_i\,v)^{\dagger}\,\vec{\varepsilon_i}$  where $\vec{\varepsilon_i} = \text{Log}(\text{Exp}(p,x_n v),\hat{y}_n)$ is the error vector and $\dagger$ denotes the adjoint operator. The norm in equation \ref{eqn:sens_def} is thus the difference of two sums that differ in only one term due to the adjacent datasets $D\sim D'$. All terms, thus, cancel except the last. We have,
\begin{align}
    \Delta_{\vec{v}} &= \frac{1}{n}\big\|x_n\,d_{\vec{v}}\Exp(p,x_n v)^{\dagger}\,\vec{\varepsilon_n} - x_n'\, d_{\vec{v}} \Exp(p,x_n' v)^{\dagger}\,\vec{\varepsilon_n}'\big\|, \\
    &\leq \frac{1}{n}\Big(\|x_n\,d_{\vec{v}}\Exp(p,x_n v)^{\dagger}\,\vec{\varepsilon_n}\| + x_n'\|d_{\vec{v}} \Exp(p,x_n' v)^{\dagger}\,\vec{\varepsilon_n}'\|\Big), \\
    &\leq \frac{1}{n}\Big(|x_n|\|d_{\vec{v}}\Exp(p,x_n v)^{\dagger}\|_{op}\,\|\vec{\varepsilon_n}\| +|x_n'| \|d_{\vec{v}}\Exp(p,x_n' v)^{\dagger}\|_{op}\,\|\vec{\varepsilon_n}'\|\Big), \\
    &\leq \frac{1}{n}\Big(\|d_{\vec{v}}\Exp(p,x_n v)\|_{op}\,\|\vec{\varepsilon_n}\| + \|d_{\vec{v}}\Exp(p,x_n' v)\|_{op}\,\|\vec{\varepsilon_n}'\|\Big),\\
    &\leq \frac{\tau}{n}\Big( \sup_{\|\vec{u}\|=1}\|J^{(x_n)}_{\vec{u}}(1)\| \;+\; \sup_{\|\vec{u}\|=1}\|J^{(x_n')}_{\vec{u}}(1)\|\Big)\label{finalv}
\end{align}
Here the second step applies the triangle inequality, the third step uses the characterization $\|A\vec{\varepsilon_n}\|\le \|A\|_{op}\|\vec{\varepsilon_n}\|$ for linear operators, and the operator norm is preserved under the adjoint, i.e.\ $\|A^\dagger\|=\|A\|$ and $x_n,x_n'\in[0,1]$ we can substitute $x_n=x_n'=1$ to get the worst upper bound. In the final step, we use the definition of operator norm and equation \ref{jac2} to get $\|d_{\vec{v}}\Exp(p,x_n v)\|_{op} = \sup_{\|\vec{u}\|=1}\|d_{\vec{v}}\Exp(p,x_n v)\,\vec{u}\| =\sup_{\|\vec{u}\|=1}\|J^{(x_n)}_{\vec{u}}(1)\|$ where $\vec{u}$ is deviation of $\vec{v}$. The initial conditions on $J'^{(x_n)}_{\vec{u}}(t)$ are $J'^{(x_n)}_{\vec{u}}(0)=\vec{u},J^{(x_n)}_{\vec{u}}(0)=0$. The $\tau$ comes out as we have assumed that the data is $\tau$-close, with $\|\varepsilon_n\|,\|\varepsilon_n'\|<\tau$ in line with assumption \ref{assump2}.

Next let's consider only the first part of equation \ref{finalv} (second part will follow the same analysis). We will use the Rauch comparison theorem by taking the model manifold with constant sectional curvature $\kappa = \kappa_l$. 
$$
\sup_{\|\vec{u}\|=1}\|J^{(x_n)}_{\vec{u}}(1)\| \leq \sup_{\|\vec{u}\|=1}\|\tilde{J}^{(x_n)}_{\vec{u}}(1)\| 
$$
Where $\tilde{J}$ is the Jacobi field on our model manifold $\tilde{\mcM}$. We can next decompose $\vec{u}$ in perpendicular and parallel components to the geodesic, $\|\tilde{J}_{||}(1)\| =\|\vec{u}_{||}\|$ and $\tilde{J}_{\perp}$ is dependent on the curvature, $\|\tilde{J}_{\perp}(1)\| = \|\frac{S_{\kappa_l}(\rho)}{\rho}\,\vec{u}_{\perp}\|$, where $\rho = |x_n \vec{v}|$ is the length of the geodesic.
\begin{align}
    S_{\kappa_l}(s)=
    \begin{cases}
    \frac{1}{\sqrt{\kappa_l}}\sin{(\sqrt{\kappa_l}s)} &,\kappa_l>0,\\
    s &,\kappa_l=0,\\
    \frac{1}{\sqrt{-\kappa_l}}\sinh{(\sqrt{-\kappa_l}s)} &,\kappa_l<0
    \end{cases}
\end{align}
Therefore, we get:
\begin{align}
    \sup_{\|\vec{u}\|=1}\|J^{(x_n)}_{\vec{u}}(1)\| &\leq \sup_{\|\vec{u}\|=1}\sqrt{\|\vec{u}_{||}\|^2 + |\frac{S_{\kappa_l}(\rho)}{\rho}|^2\|\vec{u}_{\perp}\|^2}\\
    &\leq \text{max} (1,  \left|\frac{S_{\kappa_l}(\rho)}{\rho}\right|)
\end{align}
In $S_{\kappa_l}(s)$ maximum of $\sin$ is $1$, so for nonnegative $\kappa_l$ our maximum will be $1$. For $\kappa_l<0$,  we have $x_n\in[0,1]$ and  $0\leq\rho\leq 2(\tau_m+\tau)$ (check appendix \ref{rho_bound} for detailed steps) and since $\sinh$ is monotonically increasing $\sinh{(\sqrt{-\kappa_l}\rho)}\leq\sinh{(2\sqrt{-\kappa_l}(\tau_m+\tau))}$, therefore :
\begin{align}
    \sup_{\|\vec{u}\|=1}\|J^{(x_n)}_{\vec{u}}(1)\|\leq
    \begin{cases}
    1&, \kappa_l\geq 0\\
    \frac{1}{2\sqrt{-\kappa_l}(\tau_m+\tau)}\sinh{(2\sqrt{-\kappa_l}(\tau_m+\tau))} &, \kappa_l< 0.
    \end{cases}
\end{align}
As this is independent of $x_n$, finally we get the sensitivity as:
\begin{align}
\Delta_{\vec{v}} \leq
\begin{cases}
\dfrac{2\tau}{n}, & \kappa_l \geq 0, \\[6pt]
\dfrac{\tau}{n}\dfrac{\sinh(2\sqrt{-\kappa_l}(\tau_m+\tau))}{\sqrt{-\kappa_l}(\tau_m+\tau)}, & \kappa_l < 0.
\end{cases}
\end{align}
\end{proof}

\begin{figure}[h!]
  \centering
  \begin{minipage}{0.45\textwidth}
    \centering
    \includegraphics[width=\linewidth]{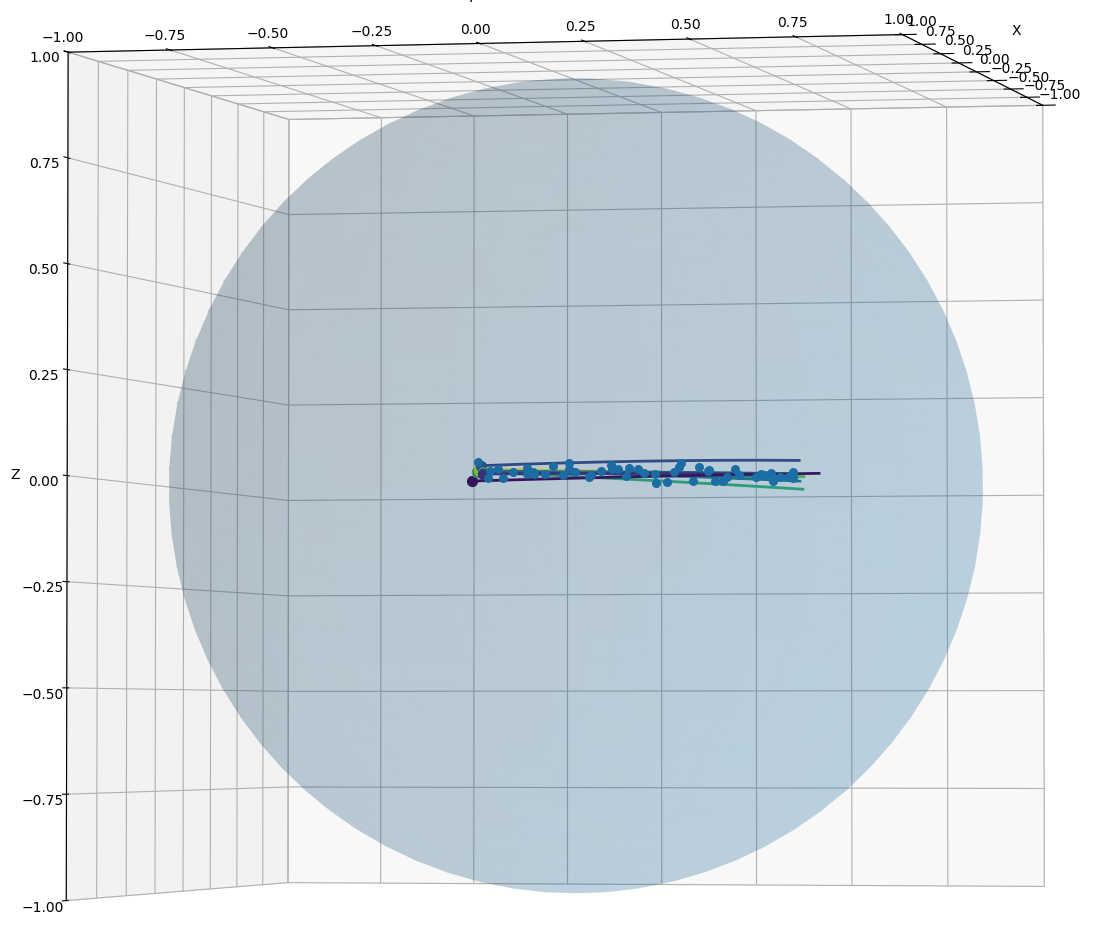}
  \end{minipage}\hfill
  \begin{minipage}{0.45\textwidth}
    \centering
    \includegraphics[width=\linewidth]{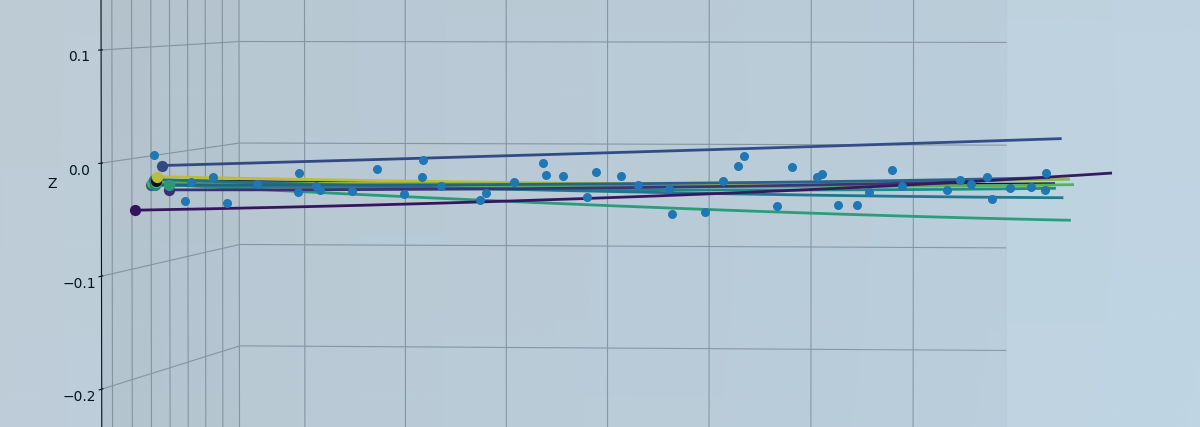}
  \end{minipage}

  \begin{minipage}{0.45\textwidth}
    \centering
    \includegraphics[width=\linewidth]{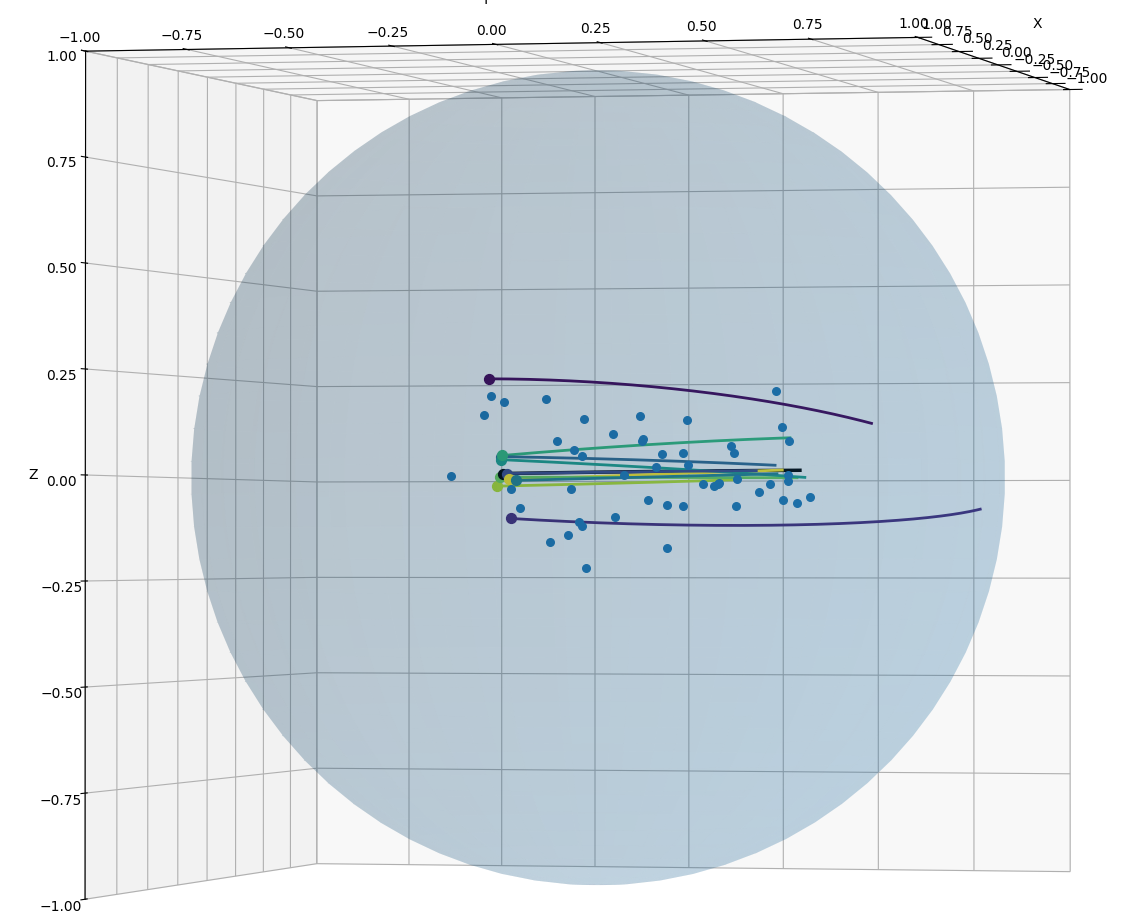}
  \end{minipage}\hfill
  \begin{minipage}{0.45\textwidth}
    \centering
    \includegraphics[width=\linewidth]{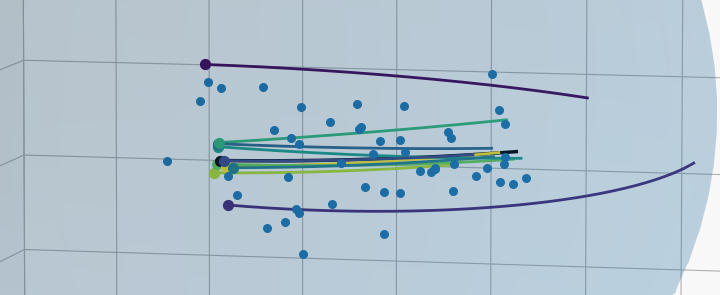}
  \end{minipage}

  \vspace{0.6em}
  \begin{minipage}{\textwidth}
    \centering
    \includegraphics[width=\linewidth, height =2cm]{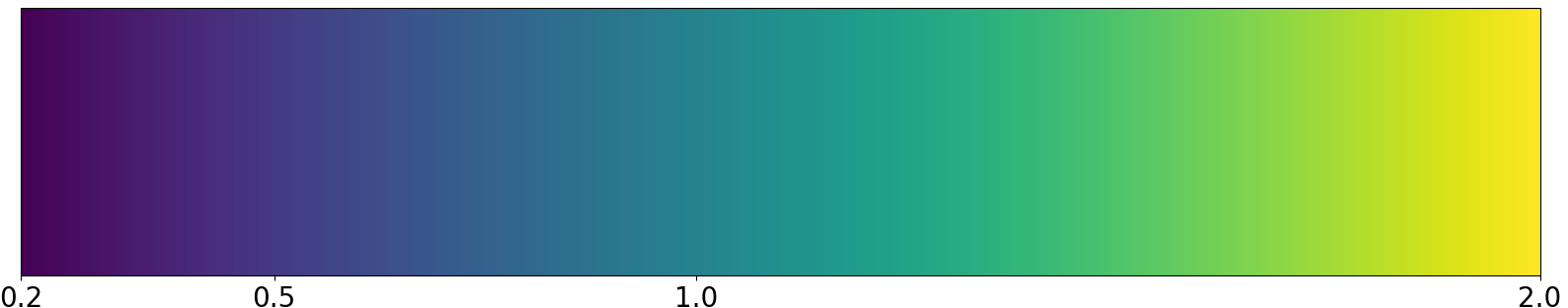}
  \end{minipage}

  \caption{Visual representation of $50$ data points (blue dots) along with non-private geodesic and private geodesics for equal allocation of privacy budgets with total budget $\epsilon\in[2.0-0.2]$ in ten steps and varying errors $\delta$. Top row: For $\delta=0.01$, Bottom row: $\delta=0.1$}
  \label{fig:sphere_cov}
\end{figure}
\section{Effect of data spread on unit sphere}\label{ss:SphereSpread}
For the experiments on the unit sphere, recall that data points were generated by perturbing locations on the sphere with Gaussian noise sampled from a multivariate normal distribution with zero mean and covariance matrix $\delta I_{3 \times 3}$, fixing $\delta = 0.001$ in the main text. In this appendix, we study the effect of larger noise by increasing $\delta$ to $0.01$ and $0.1$. Figure~\ref{fig:sphere_cov} provides a visual illustration of the data together with the non-private geodesic and the private geodesics obtained under equal allocation of the privacy budget, with total budget $\epsilon \in [2.0,\,0.2]$. The geodesics are color–coded according to their total privacy budgets:
darker blue indicates a smaller total budget. For each row, the second panel provides a zoomed–in view for clarity. In the top row ($\delta = 0.01$), some private geodesics with smaller budgets (shown in purple and blue) deviate farther from the data.
When the noise level increases to $\delta = 0.1$, this deviation becomes even more pronounced, with the low–budget geodesics pushed further away from the data.
\begin{figure}[h]
    \centering
    \includegraphics[width=\linewidth,height=0.42\linewidth,keepaspectratio]{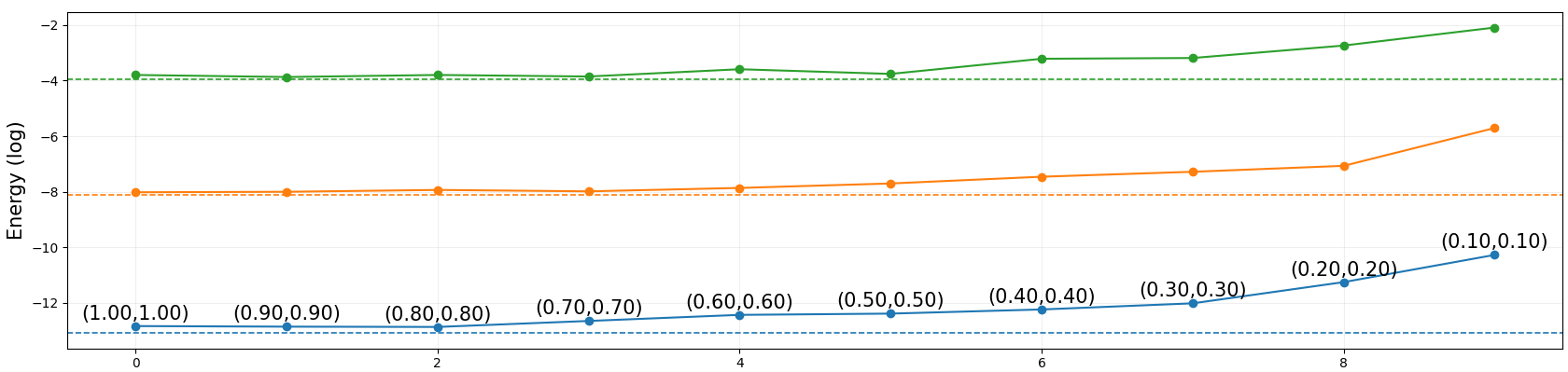}
    \caption{Log-average MSE, $\ln \bar{\mathcal{E}}$, on the unit sphere with $50$ data points for $\delta = 0.001$ (blue), $\delta = 0.01$ (orange), and $\delta = 0.1$ (green). Dotted lines indicate the corresponding energies without privatization.}
    \label{fig:sphere_energy_cov}
\end{figure}

This behavior is reflected quantitatively in the log-average MSE. Figure~\ref{fig:sphere_energy_cov} shows that, as the noise level increases, the non-private log energy also increases, and for each fixed $\delta$, the log-average MSE exhibits the same trend observed earlier: higher error for smaller total privacy budgets.

\section{Symmetric Positive Definite Matrices (SPD)}\label{apndx:SPD}

\begin{figure}[h]
    \centering
    \begin{minipage}{\linewidth}
        \centering
        \includegraphics[width=\linewidth,height=0.42\linewidth,keepaspectratio]{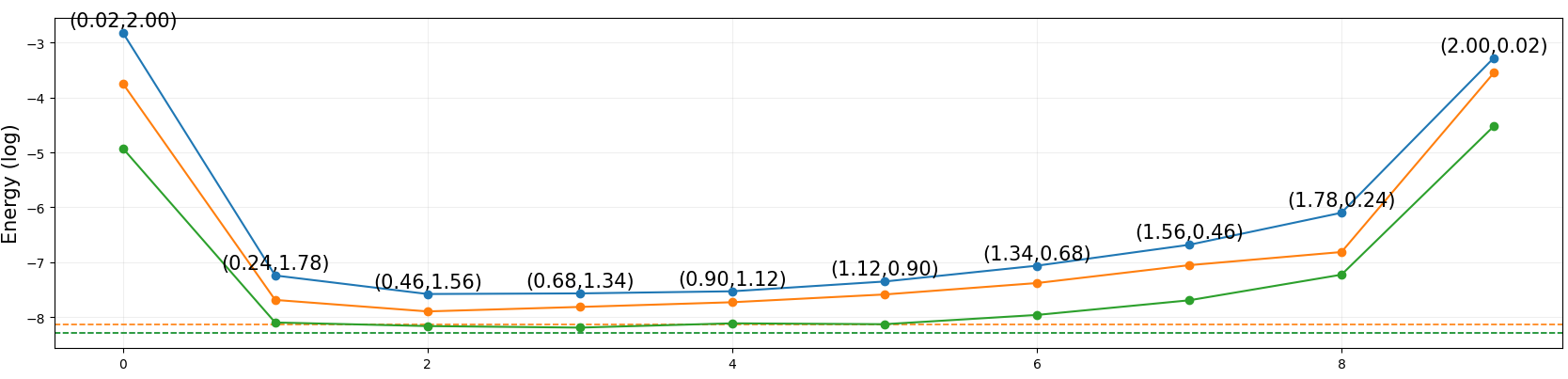}
    \end{minipage}
    \vspace{0.6em}
    \begin{minipage}{\linewidth}
        \centering
        \includegraphics[width=\linewidth,height=0.42\linewidth,keepaspectratio]{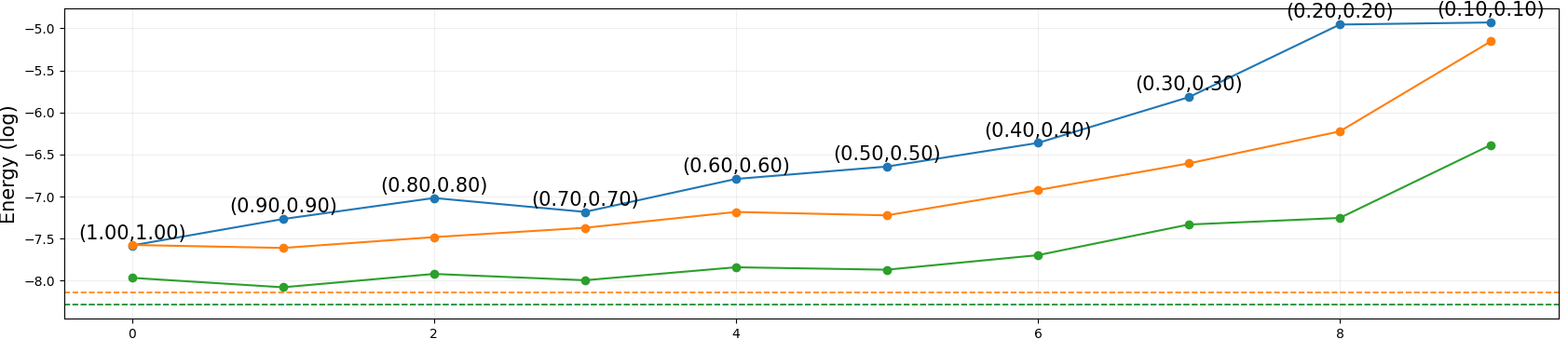}
    \end{minipage}
    \caption{Log average MSE $\ln{\bar{\mathcal{E}}}$ for $20$ (blue), $50$(orange), and $100$(green) data points on the $\mathrm{SPD}(2)$ manifold. Dotted line are the energies without privatisation. Top: unequal budget splits $\epsilon_p\in[0.02,2.0]$, $\epsilon_v\in[2.0,0.02]$, total $\epsilon=2.02$ and Bottom: Equal budget splits with varying total budget $\epsilon\in[0.2,2.0]$.}
    \label{fig:spd_energycompar}
\end{figure}

We generated synthetic data on the manifold of $2\times 2$ symmetric positive definite 
matrices, $\mathrm{SPD}(2)$, by sampling points along a geodesic with added noise. 
Two random matrices $p,q \in \mathrm{SPD}(2)$ were chosen, and the initial tangent vector 
was set to $v=\text{Log}(p,q)$ using the affine-invariant metric. For covariates $x \in [0,1]$, clean responses were 
obtained as $y_{\text{clean}}(x)=\text{Exp}(p, x\,v)$. To model observational variability, 
each $y_{\text{clean}}(x)$ was perturbed by isotropic Gaussian noise in the tangent space, 
$\xi \sim \mathcal{N}(0,\sigma^2 I)$, and mapped back as 
$y(x)=\text{Exp}(y_{\text{clean}}, \xi)$. We used $\sigma=0.01$, producing small 
deviations from the underlying geodesic, so that the dataset consists of pairs 
$\{(x_i,y_i)\}_{i=1}^N$ with $y_i \in \mathrm{SPD}(2)$.

For the $\mathrm{SPD}(2)$ manifold, under an affine-invariant metric, the sectional curvature is bounded by $[-\frac{1}{2}, 0)$\citep{criscitiello2021acceleratedfirstordermethodnonconvex} resulting in $\kappa_l = -\frac{1}{2}$. The sensitivities of the footpoint and shooting vector are given by
\[
\Delta_p = \frac{2\tau}{n}\cosh{(2\sqrt{\frac{1}{2}}(\tau_m + \tau))}.
\]
\[
\Delta_v = \frac{\tau}{n}\frac{1}{\sqrt{\frac{1}{2}}(\tau_m + \tau)}\sinh{(2\sqrt{\frac{1}{2}}(\tau_m + \tau))}.
\]
Figure~\ref{fig:spd_energycompar} shows the behavior of the average log mean squared error, 
$\ln \bar{\mathcal{E}}$, for datasets of size $20$, $50$, and $100$, depicted by the blue, 
orange, and green curves, respectively. The dashed curves in each panel indicate the geodesic 
log energy corresponding to the non-private regression estimates $(\hat{p}, \hat{v})$. 

For each pair of privacy budgets $(\epsilon_p,\epsilon_v)$, we generate $100$ private parameter pairs $(\tilde{p},\tilde{v})$ by sampling $10$ candidate footpoints and $10$ shooting vectors per footpoint. The top panel reports the case of an unequal budget split, with $\epsilon_p \in [0.02,2.0]$ and $\epsilon_v \in [2.0,0.02]$, keeping the total budget fixed at $\epsilon=2.02$. In this setting, the log error is maximal when either $\epsilon_p$ or $\epsilon_v$ is very small, yielding a parabolic trend reminiscent of what we observed on the sphere and Kendall shape space. The bottom panel presents the equal budget allocation, where the overall privacy budget decreases from $2.0$ to $0.2$, producing a qualitatively similar pattern. 

At smaller budgets, the observed increase in error is expected: tighter privacy constraints induce heavy-tailed sampling distributions for $\tilde{p}$ and $\tilde{v}$, leading to accepted samples that deviate noticeably from the non-private estimates $(\hat{p},\hat{v})$ and thereby inflate the energy. Although the private estimates yield energies that approach those of the 
non-private regression, they do not coincide exactly. As the sample size grows, the sensitivity reduces, which in turn sharpens the sampling distribution of $(\tilde{p},\tilde{v})$ around the non-private solution. Consequently, the private energies converge more closely to the geodesic energies, as seen in Figure~\ref{fig:spd_energycompar}. Across all budget pairs 
$(\epsilon_p,\epsilon_v)$, the error for $20$ data points (blue) exceeds that for $50$ points (orange), which is in turn higher than that for $100$ points (green).

\begin{figure}
    \centering
    \includegraphics[width=1.1\linewidth]{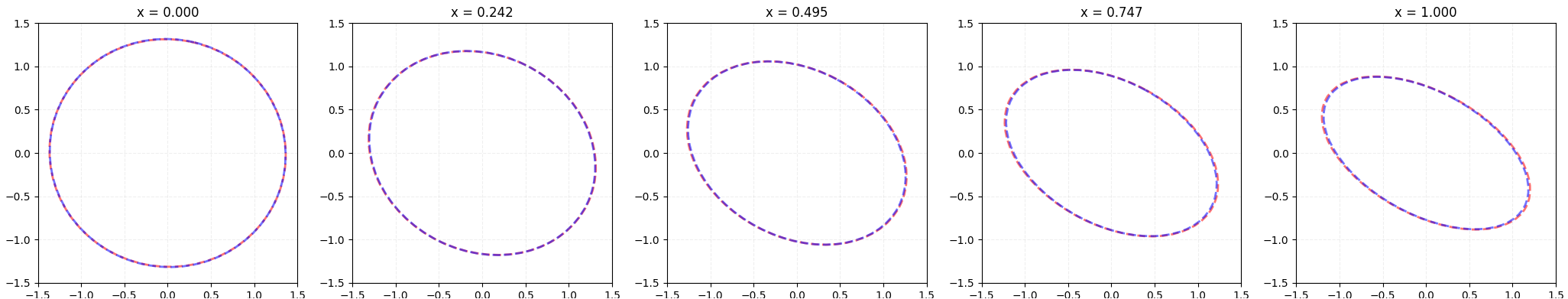}
    \caption{$\mathrm{SPD}(2)$ matrices as ellipses for equally spaces covariates. Blue curves denote predictions from the non-private regression parameters, while magenta curves correspond to predictions from the differentially private parameters with $\epsilon_p = \epsilon_v = 0.2$.}
    \label{fig:spd_shapes}
\end{figure}

We know that any element of $\mathrm{SPD}(2)$ can be expressed as a real symmetric $2\times 2$ matrix $y = \begin{pmatrix} a & b \\ b & c \end{pmatrix}$ with $a,c>0$ and $ac-b^2>0$ to ensure positive definiteness. Such a matrix admits an eigendecomposition $y = U \Lambda U^\top$, where $U \in SO(2)$ contains the orthonormal eigenvectors and $\Lambda = \mathrm{diag}(\lambda_1,\lambda_2)$ with $\lambda_1,\lambda_2>0$ are the eigenvalues. Geometrically, this representation allows one to view $y$ as defining an ellipse with axes $\sqrt{\lambda_1}$ and $\sqrt{\lambda_2}$ oriented according to $U$.

Figure~\ref{fig:spd_shapes} compares predictions from the non-private estimates 
(blue) with those from the private estimates using $(\epsilon_p=\epsilon_v=0.2)$ (magenta) with the ellipse representation. Although representing $\mathrm{SPD}(2)$ matrices as ellipses offers an intuitive view of their eigenvalues and orientations, this visualization can be misleading: ellipses that appear similar may be far apart in the Riemannian metric, while visually distinct ones may in fact be close.

\section{Validity of sensitivity bounds}\label{app:Validity}
In this section, we present evidence of the validity of the sensitivity bounds for the unit sphere and $\mathrm{SPD}(2)$ found in Section \ref{ss:Sens} and \ref{sup:sens_v}. We create artificial adjacent datasets $D,D'$, each of $n$ data points on the given manifold. This is done by generating $n+1$ data points on the sphere by the same mechanism mentioned at the start of Section \ref{ss:sphere} and \ref{apndx:SPD}, then removing the first data point for the dataset $D$ and the last data point for the dataset $D'$. 

\begin{figure*}[h]
  \includegraphics[width=\textwidth,height=5.3cm]{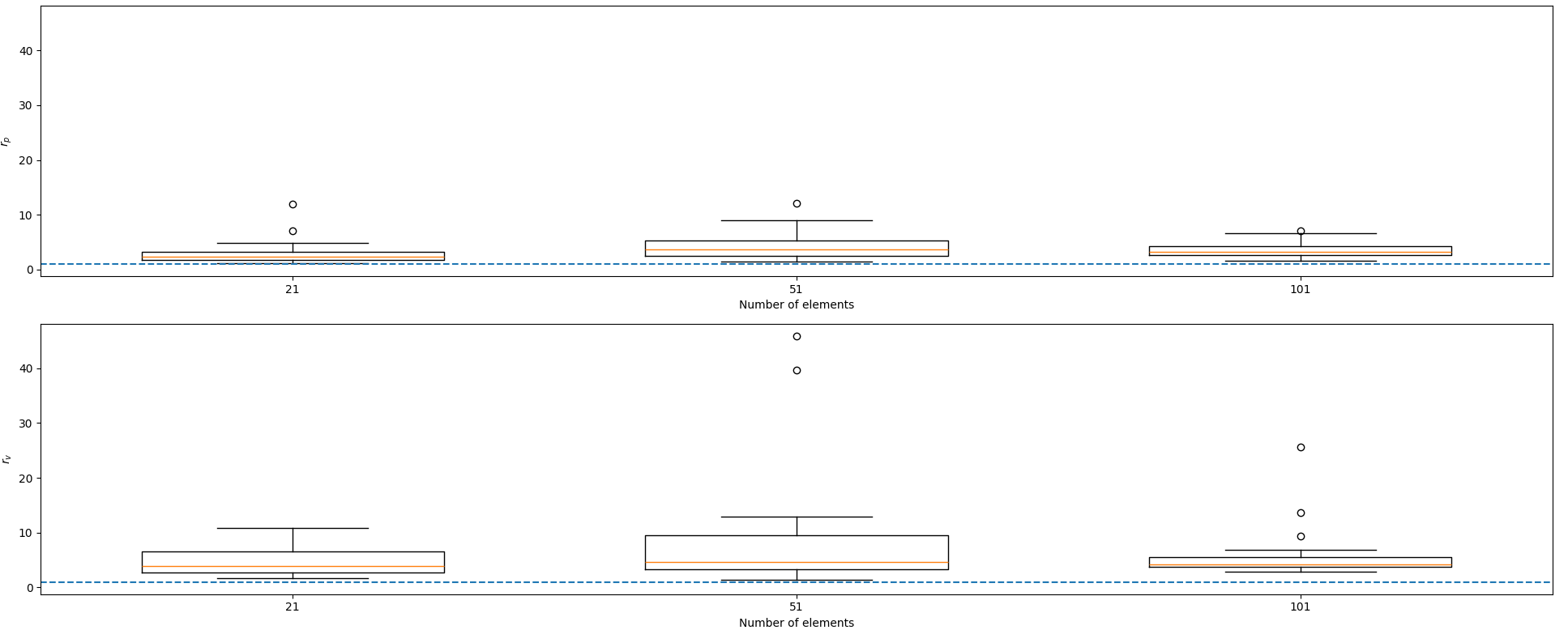}
  \caption{Box plots for ratios $r_p$ and $r_v$ for 20 pairs of adjacent datasets on a unit sphere with $\delta=0.001$. Left: Box plot of the ratio $r_p = \Delta^{thy}_p/\Delta^{exp}_p$ for $20,50,100$ data points on a unit sphere. Right: Box plot of the ratio $r_v = \Delta^{thy}_v/\Delta^{exp}_v$ for $20,50,100$ data points on a unit sphere.}
  \label{fig:boxsphere}
\end{figure*}

Let us define an experimental sensitivity for the footpoint $p$ as $\Delta^{exp}_p = \|\nabla_p E(p;D) - \nabla_p E(p;D')\|$ and for the shooting vector $v$ as $\Delta^{exp}_v = \|\nabla_vE(p;D) - \nabla_vE(p;D')\|$. We refer to these  as ``experimental bounds" as they depend on the generated datasets. Next, we define theoretical sensitivity bounds as, $\Delta^{thy}_p = \sup_{D\sim D'}\|\nabla_p E(p;D)-\nabla_p E(p;D')\|$ and $\Delta_v = \sup_{D\sim D'}\|\nabla_v E(v;D)-\nabla_v E(v;D')\|$. From section \ref{ss:sphere} we know that the theoretical bounds for the unit sphere are given by $\Delta_p^{thy} =\Delta_v^{thy} = \frac{2\,\tau}{n}$. Here we take $\tau = \text{max}_{D,D'}\,\{\epsilon_j\}$, where $\{\epsilon_j\}$ is the combined set of errors in datasets $D,D'$. To check the validity of our theoretical sensitivity bounds, we calculate the ratios $r_p = \frac{\Delta^{thy}_p}{\Delta^{exp}_p},r_v = \frac{\Delta^{thy}_v}{\Delta^{exp}_v}$. We expect the theoretical bounds to be always greater than the experimental ones, as they are defined to be the supremum over all possible adjacent datasets. Thus, the ratios $r_p$ and $r_v$ are expected to be greater than one.
\begin{figure*}[h]
  \includegraphics[width=\textwidth,height=5.3cm]{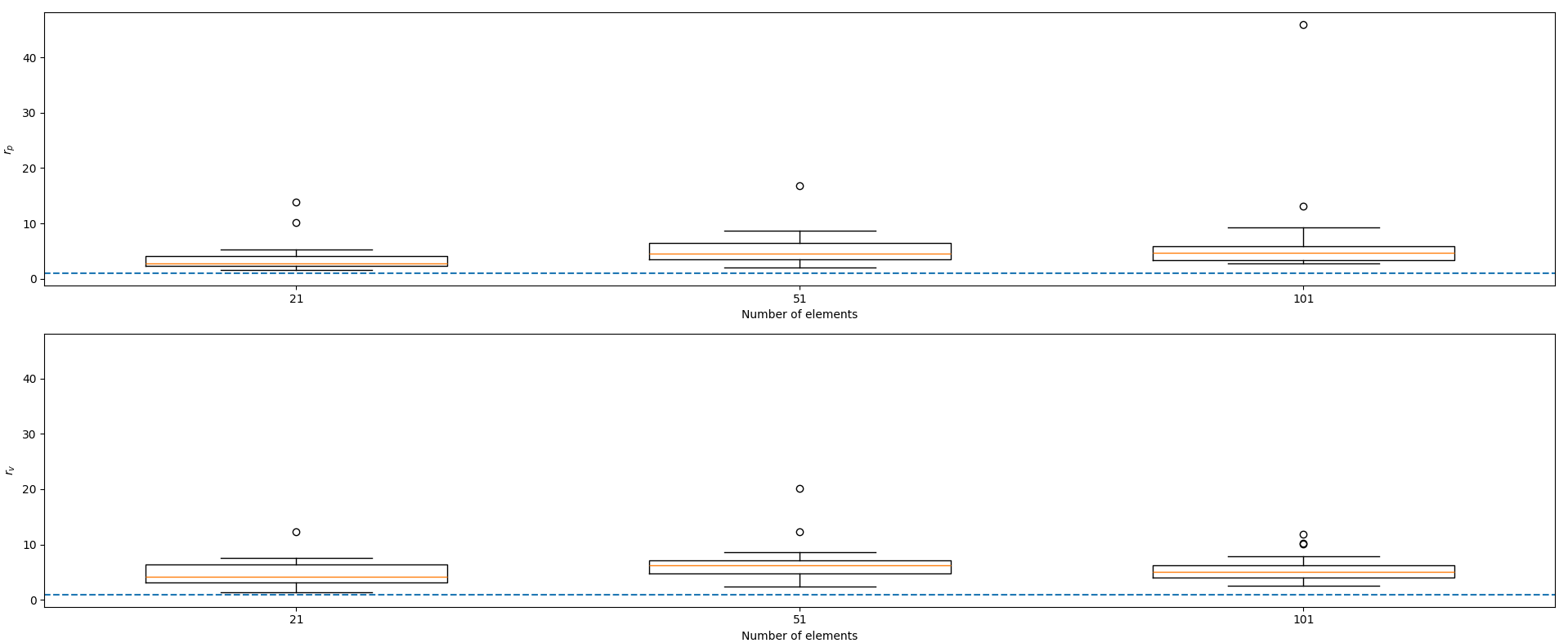}
  \caption{Box plots for ratios $r_p$ and $r_v$ for 20 pairs of adjacent datasets on $\mathrm{SPD}(2)$ with $\sigma=0.01$. Left: Box plot of the ratio $r_p = \Delta^{thy}_p/\Delta^{exp}_p$ for $20,50,100$ data points on $\mathrm{SPD}(2)$. Right: Box plot of the ratio $r_v = \Delta^{thy}_v/\Delta^{exp}_v$ for $20,50,100$ data points on $\mathrm{SPD}(2)$.}
  \label{fig:boxspd}
\end{figure*}

Figure \ref{fig:boxsphere} displays the box plots for the ratios 
$r_p = \Delta^{\mathrm{thy}}_p / \Delta^{\mathrm{exp}}_p$ and 
$r_v = \Delta^{\mathrm{thy}}_v / \Delta^{\mathrm{exp}}_v$ computed for datasets 
of size $20$, $50$, and $100$ on the unit sphere. Each box plot is obtained from 
$20$ pairs of adjacent datasets. The ratios are consistently greater than one and 
concentrated near one, with only a small number of outliers, indicating that the 
theoretical sensitivity bounds are both valid and tight. 
Figure~\ref{fig:boxspd} presents the corresponding ratios for datasets on the 
$\mathrm{SPD}(2)$ manifold, again using $20$ adjacent pairs for each dataset size. 
The results exhibit the same qualitative behavior, with all ratios exceeding one 
and remaining close to unity, further confirming the validity and tightness of the 
bounds in this setting.


\end{document}